\DeclareMathOperator*{\argminA}{arg\,min}
\DeclareMathOperator*{\argmaxA}{arg\,max}
\algnewcommand\algorithmicforeach{\textbf{for each}}
\theoremstyle{plain}
\theoremstyle{definition}
\theoremstyle{remark}
\icmltitlerunning{Transformers are Meta-Reinforcement Learners}
\begin{document}

\twocolumn[
\icmltitle{Transformers are Meta-Reinforcement Learners}




\begin{icmlauthorlist}
\icmlauthor{Luckeciano C. Melo}{microsoft,dlb}
\end{icmlauthorlist}

\icmlaffiliation{microsoft}{Microsoft, USA}
\icmlaffiliation{dlb}{Center of Excellence in Artificial Intelligence (Deep Learning Brazil), Brazil}

\icmlcorrespondingauthor{Luckeciano C. Melo}{luckeciano@gmail.com}

\icmlkeywords{Reinforcement Learning, Meta-Reinforcement Learning, Transformers}

\vskip 0.3in
]



\printAffiliationsAndNotice{}  

\begin{abstract}
The transformer architecture and variants presented a remarkable success across many machine learning tasks in recent years. This success is intrinsically related to the capability of handling long sequences and the presence of context-dependent weights from the attention mechanism. We argue that these capabilities suit the central role of a Meta-Reinforcement Learning algorithm. Indeed, a meta-RL agent needs to infer the task from a sequence of trajectories. Furthermore, it requires a fast adaptation strategy to adapt its policy for a new task - which can be achieved using the self-attention mechanism. In this work, we present TrMRL (\textbf{Tr}ansformers for \textbf{M}eta-\textbf{R}einforcement \textbf{L}earning), a meta-RL agent that mimics the memory reinstatement mechanism using the transformer architecture. It associates the recent past of working memories to build an episodic memory recursively through the transformer layers. We show that the self-attention computes a consensus representation that minimizes the Bayes Risk at each layer and provides meaningful features to compute the best actions. We conducted experiments in high-dimensional continuous control environments for locomotion and dexterous manipulation. Results show that TrMRL presents comparable or superior asymptotic performance, sample efficiency, and out-of-distribution generalization compared to the baselines in these environments.
\end{abstract}
\section{Introduction}

In recent years, the Transformer architecture \citep{10.5555/3295222.3295349} achieved exceptional performance on many machine learning applications, especially for text \citep{devlin2019bert, raffel2020exploring} and image processing \citep{dosovitskiy2021image, caron2021emerging, yuan2021tokenstotoken}. This intrinsically relates to its few-shot learning nature \cite{brown2020language}: the attention weights work as context-dependent parameters, inducing better generalization. Furthermore, this architecture parallelizes token processing by design. This property avoids backpropagation through time, making it less prone to vanishing/exploding gradients, a very common problem for recurrent models. As a result, they can handle longer sequences more efficiently.

This work argues that these two capabilities are essential for a Meta-Reinforcement Learning (meta-RL) agent. We propose TrMRL (\textbf{Tr}ansformers for \textbf{M}eta-\textbf{R}einforcement \textbf{L}earning), a memory-based meta-Reinforcement Learner which uses the transformer architecture to formulate the learning process. It works as a memory reinstatement mechanism \citep{Rovee-Collier2012} during learning, associating recent working memories to create an episodic memory which is used to contextualize the policy. 

Figure \ref{fig:tmrl} illustrates the process. We formulated each task as a distribution over working memories. TrMRL associates these memories using self-attention blocks to create a task representation in each head. These task representations are combined in the position-wise MLP to create an episodic output (which we identify as episodic memory). We recursively apply this procedure through layers to refine the episodic memory. In the end, we select the memory associated with the current timestep and feed it into the policy head.

Nonetheless, transformer optimization is often unstable, especially in the RL setting. Past attempts either fail to stabilize \citep{mishra2018simple} or required architectural additions \citep{parisotto2019stabilizing} or restrictions on the observations space \citep{loynd2020working}. We hypothesize that this challenge is because the instability of early stages of transformer optimization harms initial exploration, which is crucial for environments where the learned behaviors must guide exploration to prevent poor policies. We argue that this challenge can be mitigated through a proper weight initialization scheme. For this matter, we applied T-Fixup initialization \citep{pmlr-v119-huang20f}.

\begin{figure*}[t]
  \centering
\includegraphics[width=1.0\linewidth]{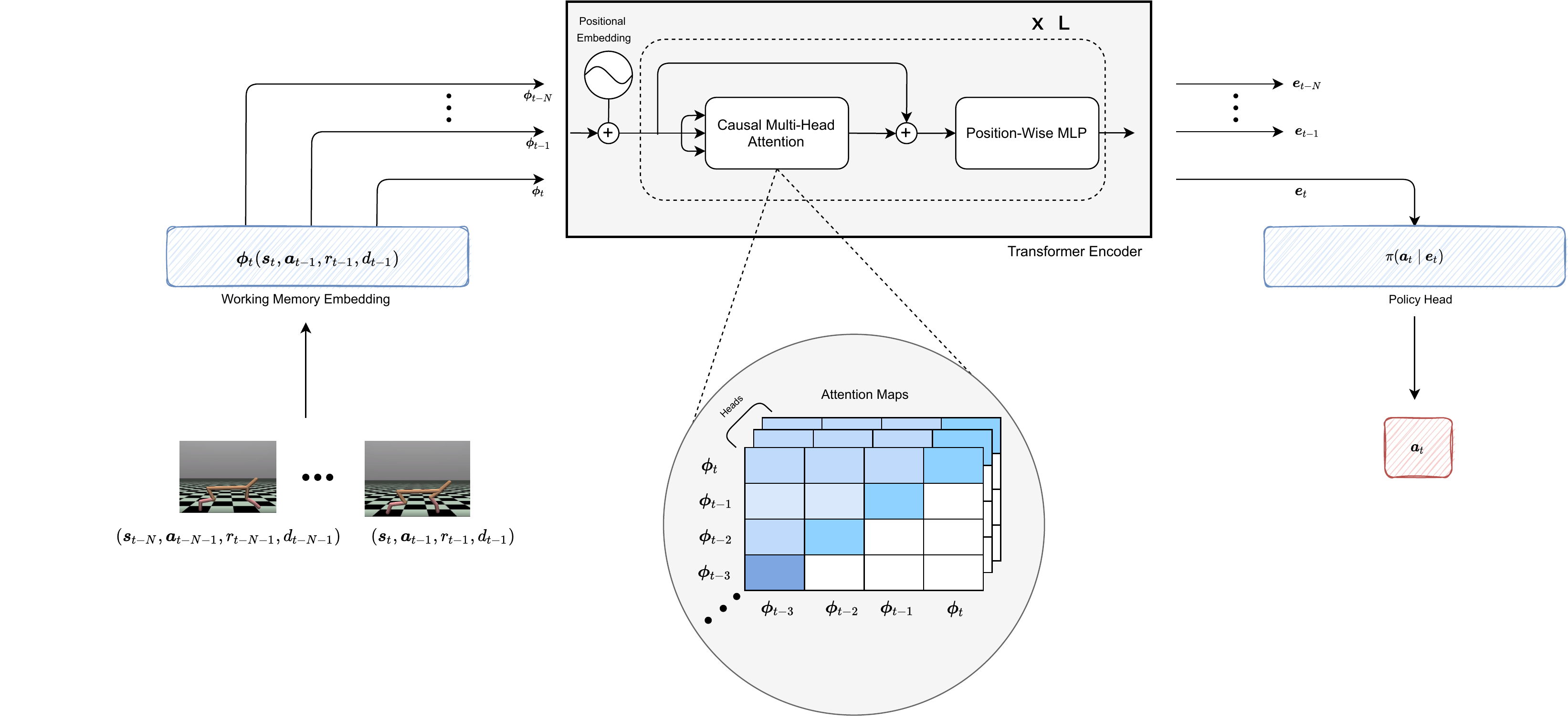}
  \caption{Illustration of the TrMRL agent. At each timestep, it associates the recent past of working memories to build an episodic memory through transformer layers recursively.  We argue that the self-attention works as a fast adaptation strategy since it provides context-dependent parameters.}
  \label{fig:tmrl}
\end{figure*}

We conducted a series of experiments to evaluate meta-training, fast adaptation, and out-of-distribution generalization in continuous control environments for locomotion and robotic manipulation. Results show that TrMRL presents comparable or superior performance and sample efficiency compared to the meta-RL baselines. We also conducted an experiment to validate the episode memory refinement process. Finally, we conducted an ablation study to show the effectiveness of the T-Fixup initialization, and the sensibility to network depth, sequence size, and the number of attention heads.



\section{Related Work}

\textbf{Meta-Learning} is an established Machine Learning (ML) principle to learn inductive biases from the distribution of tasks to produce a data-efficient learning system \citep{155621, 10.5555/870510, 10.5555/296635.296639}. This principle spanned in a variety of methods in recent years, learning different components of an ML system, such as the optimizer \citep{NIPS2016_fb875828, li2016learning, pmlr-v70-chen17e}, neural architectures \citep{10.5555/3360092, 45826}, metric spaces \citep{NIPS2016_90e13578}, weight initializations \citep{pmlr-v70-finn17a, nichol2018firstorder,10.5555/3327546.3327622}, and conditional distributions \citep{zintgraf2019fast, melo2019bottomup}. Another branch of methods learns the entire system using memory-based architectures \citep{ortega2019metalearning, wang2017learning, duan2016rl2, pmlr-v80-ritter18a} or generating update rules by discovery \citep{NEURIPS2020_0b96d81f} or evolution \citep{coreyes2021evolving}. 

\textbf{Memory-Based Meta-Learning} is the particular class of methods where we focus on in this work. In this context, \citet{wang2017learning, duan2016rl2} concurrently proposed the RL$^2$ framework, which formulates the learning process as a Recurrent Neural Network (RNN) where the hidden state works as the memory mechanism. Given the recent rise of attention-based architectures, one natural idea is to use it as a replacement for RNNs. \citet{mishra2018simple} proposed an architecture composed of causal convolutions (to aggregate information from past experience) and soft-attention (to pinpoint specific pieces of information). In contrast, our work applies causal, multi-head self-attention by stabilizing the complete transformer architecture with an arbitrarily large context window. Finally, \citet{ritter2021rapid} also applied multi-head self-attention for rapid task solving for RL environments. However, in a different dynamic: their work applied the attention mechanism iteratively in a pre-defined episodic memory, while ours applies it recursively through transformer layers to build an episodic memory from the association of recent working memories.

Our work has intersections with Cognitive Neuroscience research on memory for learning systems \citep{Hoskin170720, Rovee-Collier2012, Wang295964}. In this context, \citet{ritter2018there} extended the RL$^2$ framework incorporating a differentiable neural dictionary as the inductive bias for episodic memory recall. In the same line, \citet{DBLP:conf/cogsci/RitterWKB18} also extended RL$^2$ but integrating a different episodic memory system inspired by the reinstatement mechanism. In our work, we also mimic reinstatement to retrieve episodic memories from working memories but using self-attention. Lastly, \citet{NEURIPS2019_02ed8122} studied the association between working and episodic memories for RL agents, specifically for memory tasks, proposing separated inductive biases for these memories based on LSTMs and auxiliary unsupervised losses. In contrast, our work studies this association for the Meta-RL problem, using memory as a task proxy implemented by the transformer architecture.

\textbf{Meta-Reinforcement Learning} is a branch of Meta-Learning for RL agents. Some of the algorithms described in past paragraphs extend to the Meta-RL setting by design
\citep{pmlr-v70-finn17a, mishra2018simple, wang2017learning, duan2016rl2}. Others were explicitly designed for RL and often aimed to create a task representation to condition the policy. PEARL \citep{rakelly2019efficient} is an off-policy method that learns a latent representation of the task and explores via posterior sampling. MAESN \citep{10.5555/3327345.3327436} also creates task variables but optimizes them with on-policy gradient descent and explores by sampling from the prior. MQL \citep{fakoor2020metaqlearning} is also an off-policy method, but it uses a deterministic context that is not permutation invariant implemented by an RNN. Lastly, VariBAD \cite{zintgraf2020varibad} formulates the problem as a Bayes-Adaptive MDP and extends the RL$^2$ framework by incorporating a stochastic latent representation of the task trained with a VAE objective. Our work contrasts all the previous methods in this task representation: we condition the policy in the episodic memory generated by the transformer architecture from the association of past working memories. We show that this episodic memory works as a \textit{proxy} to the task representation.

\textbf{Transformers for RL.} The application of the transformer architecture in the RL setting is still an open challenge. \citet{mishra2018simple} tried to apply this architecture for simple bandit tasks and tabular MDPs and reported unstable train and random performance. \citet{parisotto2019stabilizing} then proposed some architectural changes in the vanilla transformer, reordering layer normalization modules and replacing residual connections with expressing gating mechanisms, improving state-of-the-art performance for a set of memory environments. \citet{loynd2020working} also studied how transformer-based models can improve the performance of sequential decision-making agents. It stabilized the architecture using factored observations and an intense hyperparameter tuning procedure, resulting in improved sample efficiency. In contrast to these methods, our work stabilizes the transformer model by improving optimization through a better weight initialization. In this way, we could use the vanilla transformer without architectural additions or imposing restrictions on the observations.

Finally, recent work studied how to replace RL algorithms with transformer-based language models \citep{janner2021reinforcement, chen2021decision}. Using a supervised prediction loss in the offline RL setting, they modeled the agent as a sequence problem. Our work, on the other hand, considers the standard RL formulation in the meta-RL setting. This formulation presents more challenges, since the agent needs to explore in the environment and apply RL gradients to maximize rewards, which is acknowledged to be much noisier \citep{NIPS2016_2f885d0f}. 

\section {Preliminaries}

We define a Markov decision process (MDP) by a tuple $\mathcal{M} = (\mathcal{S}, \mathcal{A}, \mathcal{P}, \mathcal{R}, \mathcal{P}_{0}, \gamma, H)$, where $\mathcal{S}$ is a state space, $\mathcal{A}$ is an action space, $\mathcal{P} : \mathcal{S} \times \mathcal{A} \times \mathcal{S} \rightarrow [0, \infty)$ is a transition dynamics, $\mathcal{R} : \mathcal{S} \times \mathcal{A} \rightarrow [-R_{max}, R_{max}]$ is a bounded reward function, $\mathcal{P}_{0}: \mathcal{S} \rightarrow [0, \infty)$ is an
initial state distribution, $\gamma \in [0, 1]$ is a discount factor, and $H$ is the horizon. The standard RL objective is to maximize the cumulative reward, i.e., $\max \mathbb{E} [\sum_{t=0}^{T} \gamma^{t} \mathcal{R} (s_{t}, a_{t})]$, with $ a_{t} \sim \pi_{\boldsymbol{\theta}} (a_{t} \mid s_{t})$ and $s_{t} \sim \mathcal{P}(s_{t} \mid s_{t-1}, a_{t-1})$, where $\pi_{\boldsymbol{\theta}} : \mathcal{S} \times \mathcal{A} \rightarrow [0, \infty)$ is a policy parameterized by $\boldsymbol{\theta}$.

\subsection{Problem Setup: Meta-Reinforcement Learning}

In the meta-RL setting, we define $p(\mathcal{M}) : \mathcal{M} \rightarrow [0, \infty)$ a distribution over a set of MDPs $\mathcal{M}$. During meta-training, we sample $\mathcal{M}_{i} \sim p(\mathcal{M})$ from this distribution, where $\mathcal{M}_{i} = (\mathcal{S}, \mathcal{A}, \mathcal{P}_{i}, \mathcal{R}_{i}, \mathcal{P}_{0, i}, \gamma, H)$. Therefore, the tasks\footnote{We use the terms task and MDP interchangeably.} share a similar structure in this setting, but reward function and transition dynamics vary. The goal is to learn a policy that, during meta-testing, can adapt to a new task sampled from the same distribution $p(\mathcal{M})$. In this context, adaptation means maximizing the reward under the task in the most efficient way. To achieve this, the meta-RL agent should learn the prior knowledge shared across the distribution of tasks. Simultaneously, it should learn how to differentiate and identify these tasks using only a few episodes.

\subsection{Transformer Architecture}
The transformer architecture \citep{10.5555/3295222.3295349} was first proposed as an encoder-decoder architecture for neural machine translation. Since then, many variants have emerged, proposing simplifications or architectural changes across many ML problems \citep{dosovitskiy2021an, DBLP:journals/corr/abs-2005-14165, parisotto2019stabilizing}. Here, we describe the encoder architecture as it composes our memory-based meta-learner.

The transformer encoder is a stack of multiple equivalent layers. There are two main components in each layer: a multi-head self-attention block, followed by a position-wise feed-forward network. Each component contains a residual connection \citep{he2015deep} around them, followed by layer normalization \citep{ba2016layer}. The multi-head self-attention (MHSA) block computes the self-attention operation across many different heads, whose outputs are concatenated to serve as input to a linear projection module, as in Equation \ref{eq:mhsa}:

\begin{align}\label{eq:mhsa}
\text{MHSA}(K,Q,V) &=  \text{Concat}(h_{1}, h_{2}, \dots, h_{\omega})W_{o}, \nonumber \\ 
& h_{i} = \textit{softmax}(\frac{Q K^{T}}{\sqrt{d}} \cdot M) V,
\end{align}

where $K,Q,V$ are the keys, queries, and values for the sequence input, respectively. Additionally, $d$ represents the dimension size of keys and queries representation and $\omega$ the number of attention heads. $M$ represents the attention masking operation. $W_{o}$ represents a linear projection operation.

The position-wise feed-forward block is a 2-layer dense network with a ReLU activation between these layers. All positions in the sequence input share the parameters of this network, equivalently to a $1 \times 1$ temporal convolution over every step in the sequence. Finally, we describe the positional encoding. It injects the relative position information among the elements in the sequence input since the transformer architecture fully parallelizes the input processing. The standard positional encoding is a sinusoidal function added to the sequence input \citep{10.5555/3295222.3295349}.

\subsection{T-Fixup Initialization}\label{sec:tfixupinit}
The training of transformer models is notoriously difficult, especially in the RL setting \citep{parisotto2019stabilizing}. Indeed, gradient optimization with attention layers often requires complex learning rate warmup schedules to prevent divergence \citep{pmlr-v119-huang20f}. Recent work suggests two main reasons for this requirement. First, the Adam optimizer \citep{kingma2017adam} presents high variance in the inverse second moment for initial updates, proportional to a divergent integral \citep{liu2020variance}. It leads to problematic updates and significantly affects optimization. Second, the backpropagation through layer normalization can also destabilize optimization because the associated error depends on the magnitude of the input \citep{xiong2020layer}.

Given these challenges, \citet{pmlr-v119-huang20f} proposed a weight initialization scheme (T-Fixup) to eliminate the need for learning rate warmup and layer normalization. This is particularly important to the RL setting once current RL algorithms are very sensitive to the learning rate for learning and exploration.

T-Fixup appropriately bounds the original Adam update to make variance finite and reduce instability, regardless of model depth. We refer to \citet{pmlr-v119-huang20f} for the mathematical derivation. We apply the T-Fixup for the transformer encoder as follows: 

\begin{itemize}
    \item Apply Xavier initialization \citep{Glorot10understandingthe} for all parameters excluding input embeddings. Use Gaussian initialization $\mathcal{N}(0, d^{-\frac{1}{2}})$, for input embeddings, where $d$ is the embedding dimension;
    \item Scale the linear projection matrices in each encoder attention block and position-wise feed-forward block by $0.67N^{-\frac{1}{4}}$.
\end{itemize}

\section{Transformers are Meta-Reinforcement Learners}\label{sec:methodology}

\begin{figure}[t]
    \centering
    \includegraphics[width=0.5\textwidth]{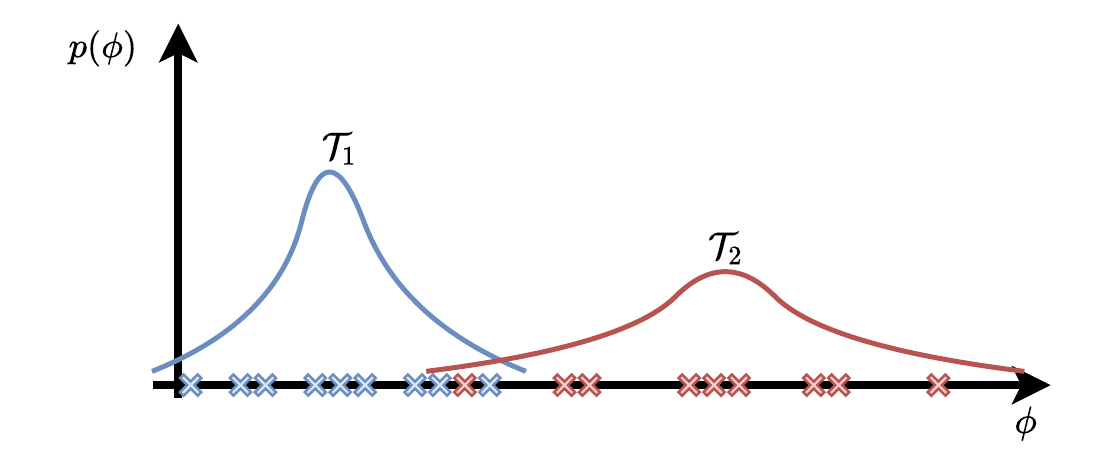}
  \caption{The illustration of two tasks ($\mathcal{T}_{1}$ and $\mathcal{T}_{2}$) as distributions over working memories. The intersection of both densities represents the ambiguity between $\mathcal{T}_{1}$ and $\mathcal{T}_{2}$.}
  \label{fig:tasksdistribution}
\end{figure}

In this work, we argue that two critical capabilities of transformers compose the central role of a Meta-Reinforcement Learner. First, transformers can handle long sequences and reason over long-term dependencies, which is essential to the meta-RL agent to identify the MDP from a sequence of trajectories. Second, transformers present context-dependent weights from self-attention. This mechanism serves as a fast adaptation strategy and provides necessary adaptability to the meta-RL agent for new tasks.

\begin{figure*}[t]
  \centering
\includegraphics[width=0.8\linewidth]{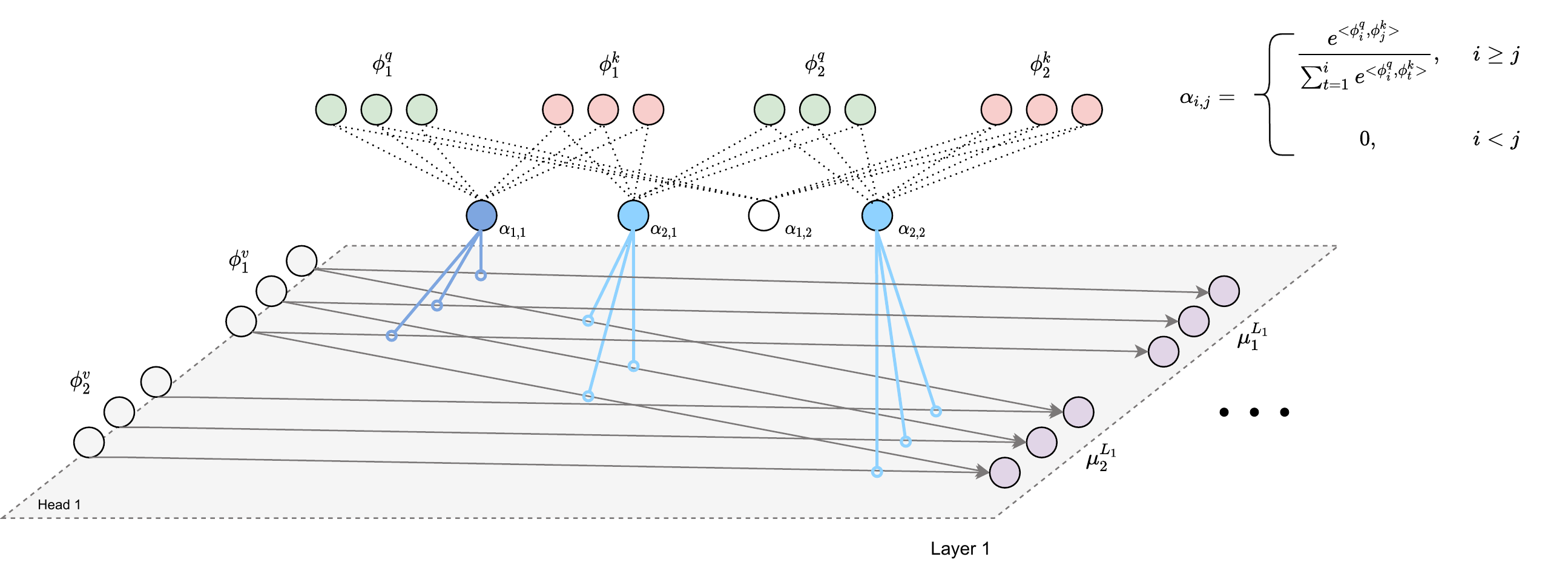}
  \caption{Illustration of causal self-attention as a fast adaptation strategy. In this simplified scenario (2 working memories), the attention weights $\alpha_{i, j}$ drives the association between the current working memory and the past ones to compute a task representation $\mu_{t}$. Self-attention computes this association by relative similarity.}
  \label{fig:attn}
\end{figure*}

\subsection{Task Representation}


We represent a working memory at the timestep $t$ as a parameterized function $\phi_{t}(\boldsymbol{s}_{t}, \boldsymbol{a}_{t}, r_{t}, \eta_{t})$, where $\boldsymbol{s}_{t}$ is the MDP state, $\boldsymbol{a}_{t} \sim \pi(\boldsymbol{a}_{t} \mid \boldsymbol{s}_{t})$ is an action, $r_{t} \sim \mathcal{R}(\boldsymbol{s}_{t}, \boldsymbol{a}_{t})$ is the reward, and $\eta_{t}$ is a boolean flag to identify whether this is a terminal state. Our first hypothesis is that we can define a task $\mathcal{T}$ as a distribution over working memories, as in Equation \ref{eq:taskdist}:
\begin{equation}\label{eq:taskdist}
    \mathcal{T}(\phi) : \Phi \rightarrow [0, \infty),
\end{equation}

where $\Phi$ is the working memory embedding space. In this context, one goal of a meta-RL agent is to learn $\phi$ to make a distinction among the tasks in the embedding space $\Phi$. Furthermore, the learned embedding space should also approximate the distributions of similar tasks so that they can share knowledge. Figure \ref{fig:tasksdistribution} illustrates this concept for a one-dimensional representation.

We aim to find a representation for the task given its distribution to contextualize our policy. Intuitively, we can represent each task as a linear combination of working memories sampled by the policy interacting with it:

\begin{align}\label{eq:taskrep}
    \mu_{\mathcal{T}} &= \sum_{t = 0}^{N}\alpha_{t} \cdot \mathcal{W}(\phi_{t}(\boldsymbol{s}_{t}, \boldsymbol{a}_{t}, r_{t}, \eta_{t})), \nonumber \\
    &\text{with} \sum_{t = 0}^{N}\alpha_{t} = 1
\end{align}

where $N$ represents the length of a segment of sampled trajectories during the policy and task interaction. $\mathcal{W}$ represents an arbitrary linear transformation. Furthermore, $\alpha_{t}$ is a coefficient to compute how relevant a particular working memory $t$ is to the task representation, given the set of sampled working memories. Next, we show how the self-attention computes these coefficients, which we use to output an episodic memory from the transformer architecture.

\subsection{Self-Attention as a Fast Adaptation Strategy}
In this work, our central argument is that self-attention works as a fast adaptation strategy. The context-dependent weights dynamically compute the working memories coefficients to implement Equation \ref{eq:taskrep}. We now derive \textit{how} we compute these coefficients. Figure \ref{fig:attn} illustrates this mechanism.

Let us define $\phi_{t}^{k}$, $\phi_{t}^{q}$, and $\phi_{t}^{v}$ as a representation\footnote{we slightly abuse the notation by omitting the function arguments -- $\boldsymbol{s}_{t}$, $\boldsymbol{a}_{t}$, $r_{t}$, and $\eta_{t}$ -- for the sake of conciseness.} of the working memory at timestep $t$ in the keys, queries, and values spaces, respectively. The dimension of the queries and keys spaces is $d$. We aim to compute the attention operation in Equation \ref{eq:mhsa} for a sequence of $T$ timesteps, resulting in Equation \ref{eq:mhsatrmrl}:





\begin{align}\label{eq:mhsatrmrl}
\textit{softmax}(\frac{Q K^{T}}{\sqrt{d}} \cdot M) V &= \frac{1}{\sqrt{d}}\begin{bmatrix} 
    \alpha_{1, 1} & \alpha_{1,2} & \dots \\
    \vdots & \ddots & \\
    \alpha_{T, 1} &        & \alpha_{T, T}
    \end{bmatrix}\begin{bmatrix}
    \phi_{1}^{v} \\
    \vdots \\
    \phi_{T}^{v}
    \end{bmatrix} = \begin{bmatrix}
    \mu_{1} \\
    \vdots \\
    \mu_{T}
    \end{bmatrix}, \nonumber \\
    &\text{where } 
    \begin{cases}
      \alpha_{i, j} = \frac{\exp{\langle\phi_{i}^{q}, \phi_{j}^{k}\rangle}}{\sum_{n = 1}^{i}\exp{\langle\phi_{1}^{q}, \phi_{n}^{k}\rangle}} & \text{if $i \leq j$}\\
      0 & \text{otherwise.}
    \end{cases}
\end{align}

where $\langle a_{i}, b_{j}\rangle = \sum_{n = 0}^{d} a_{i, n}\cdot b_{j, n}$ is the dot product between the working memories $a_{i}$ and $b_{j}$. Therefore, for a particular timestep $t$, the self-attention output is:

\begin{align}\label{eq:eprep}
    \mu_{t} &= \frac{1}{\sqrt{d}} \cdot \frac{\phi_{1}^{v} \cdot \exp{\langle\phi_{t}^{q}, \phi_{1}^{k}\rangle} + \cdots + \phi_{t}^{v} \cdot \exp{\langle\phi_{t}^{q}, \phi_{t}^{k}\rangle}}{\sum_{n = 1}^{i}\exp{\langle\phi_{1}^{q}, \phi_{n}^{k}\rangle}} \nonumber \\ 
    &= \frac{1}{\sqrt{d}}\sum_{n = 1}^{t} \alpha_{t, n} W_{v}(\phi_{t}).
\end{align}

Equation \ref{eq:eprep} shows that the self-attention mechanism implements the task representation in Equation \ref{eq:taskrep} by associating past working memories \textit{given that} the current one is $\phi_{t}$. It computes this association with \textit{relative similarity} through the dot product normalized by the softmax operation. This inductive bias helps the working memory representation learning to approximate the density of the task distribution $\mathcal{T}(\phi)$.

\subsection{Transformers and Memory Reinstatement}

We now argue that the transformer model implements a memory reinstatement mechanism for episodic memory retrieval. An episodic memory system is a long-lasting memory that allows an agent to recall and re-experience personal events \citep{tulving2002}. It complements the working memory system, which is active and relevant for short periods \citep{BADDELEY2010R136} and works as a buffer for episodic memory retrieval \citep{zilli2008modeling}.  Adopting this memory interaction model, we model an episodic memory as a transformation over a collection of past memories. More concretely, we consider that a transformer layer implements this transformation: 



\begin{equation}\label{eq:trblock}
    e^{l}_{t} = f(e^{l-1}_{0}, \dots, e^{l-1}_{t}),
\end{equation}

where $e^{l}_{t}$ represents the episodic memory retrieved from the last layer $l$ for the timestamp $t$ and $f$ represents the transformer layer. Equation \ref{eq:trblock} provides a recursive definition, and $e^{0}_{t}$ (the base case) corresponds to the working memories $\phi_{t}$ In this way, the transformer architecture recursively refines the episodic memory interacting memories retrieved from the past layer. We show the pseudocode for this process in Algorithm \ref{alg:cap} (Appendix \ref{ap:pseudocode}). This refinement is guaranteed by a crucial property of the self-attention mechanism: it computes a consensus representation across the input memories associated to the sub-trajectory, as stated by Theorem \ref{theor1} (Proof in Appendix \ref{ap:theorproof}). Here, we define consensus representation as the memory representation that is closest on average to all likely representations \citep{kumar-byrne-2004-minimum}, i.e., minimizes the Bayes risk  considering the set of episodic memories.

\begin{restatable}{theorem}{fta}
\label{theor1}
Let $\mathcal{S}^{l} = (\boldsymbol{e}^{l}_{0}, \dots, \boldsymbol{e}^{l}_{N}) \sim p(\boldsymbol{e}|\mathcal{S}^{l}, \boldsymbol{\theta}_{l})$ be a set of normalized episodic memory representations sampled from the posterior distribution $p(\boldsymbol{e}|\mathcal{S}^{l}, \boldsymbol{\theta}_{l})$ induced by the transformer layer $l$, parameterized by $\boldsymbol{\theta_{l}}$. Let $K$, $Q$, $V$ be the Key, Query, and Value vector spaces in the self-attention mechanism. Then, the self-attention in the layer $l+1$ computes a consensus representation $e^{l+1}_{N} = \frac{\sum_{t=1}^{N}{\boldsymbol{e}_{t}^{l, V}}\cdot\exp{\langle\boldsymbol{e}^{l, Q}_{t}, \boldsymbol{e}^{l, K}_{i} \rangle}}{\sum_{t=1}^{N}{\exp{\langle\boldsymbol{e}^{l, Q}_{t}, \boldsymbol{e}^{l, K}_{i} \rangle}}}$ whose associated Bayes risk (in terms of negative cosine similarity) lower bounds the Minimum Bayes Risk (MBR) predicted from the set of candidate samples $\mathcal{S}^{l}$ projected onto the $V$ space.
\end{restatable}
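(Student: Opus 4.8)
The plan is to recognize the claim as a standard decision-theoretic fact: under a loss affinely equivalent to squared Euclidean distance on the unit sphere, the posterior mean is the \emph{unrestricted} Bayes-optimal predictor, so its Bayes risk cannot exceed the risk of the best element of any fixed finite candidate set. First I would fix notation, writing $v_t := \boldsymbol{e}^{l,V}_t$ for the value projections of the samples and $w_t := \exp\langle \boldsymbol{e}^{l,Q}_t, \boldsymbol{e}^{l,K}_i\rangle \big/ \sum_s \exp\langle \boldsymbol{e}^{l,Q}_s, \boldsymbol{e}^{l,K}_i\rangle$ for the softmax attention weights, so that $w_t \ge 0$, $\sum_t w_t = 1$, and the consensus representation in the statement is exactly $c := e^{l+1}_N = \sum_t w_t v_t$. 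I would take the posterior over value-space representations induced by layer $l$ to be the discrete law placing mass $w_t$ on $v_t$ (consistent with the attention weights appearing in the consensus formula), so that the Bayes risk of a representation $h$ is $R(h) := -\sum_t w_t \cos(h, v_t)$ and the MBR over the candidate set is $\mathrm{MBR} := \min_t R(v_t)$.

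Next I would reduce everything to inner products via normalization. Because the episodic memories are normalized and cosine similarity is invariant under positive rescaling, I may assume $\|v_t\| = 1$ for all $t$ (normalizing each $v_t$ changes no cosine, and rescaling them inside the weighted sum only rescales $c$, hence again changes no cosine). Then $R(h) = -\tfrac{1}{\|h\|}\langle h, \sum_t w_t v_t\rangle = -\tfrac{1}{\|h\|}\langle h, c\rangle$ for any nonzero $h$; evaluating at a unit candidate gives $R(v_t) = -\langle v_t, c\rangle$, so $\mathrm{MBR} = -\max_t \langle v_t, c\rangle$, while evaluating at the consensus gives $R(c) = -\langle c, c\rangle/\|c\| = -\|c\|$. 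The single inequality in the argument is then Cauchy--Schwarz: $\langle v_t, c\rangle \le \|v_t\|\,\|c\| = \|c\|$ for every $t$, hence $\max_t \langle v_t, c\rangle \le \|c\|$, and therefore $R(c) = -\|c\| \le -\max_t \langle v_t, c\rangle = \mathrm{MBR}$, which is the desired lower bound. Equivalently, I could invoke the bias--variance identity $\sum_t w_t \|h - v_t\|^2 = \|h-c\|^2 + \sum_t w_t\|v_t - c\|^2$ together with $\|h-v\|^2 = 2 - 2\cos(h,v)$ on the unit sphere; this exhibits $c$ as the unrestricted minimizer of $R$, so restricting the minimization to $h \in \{v_t\}$ can only increase the risk, giving the same conclusion and justifying the label ``consensus representation''.

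I expect the genuinely delicate part to be bookkeeping rather than inequality-chasing: making the informal objects in the statement precise -- in particular which distribution the Bayes-risk expectation is taken under (the softmax attention weights, as above), and the normalization of the value vectors, which is exactly what makes the softmax-weighted mean coincide, up to a positive scalar, with the cosine-optimal center; if one does not wish to assume the $V$-projection is norm-preserving, the statement should simply be read with the $v_t$ (equivalently, the candidate set) normalized. Once these conventions are fixed, the proof is the two-line Cauchy--Schwarz computation above, and the only remaining remark is that $R(c) \le \mathrm{MBR}$ holds with equality iff some candidate is already parallel to the consensus $c$.
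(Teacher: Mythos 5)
Your proof is correct, and it follows the same high-level strategy as the paper's: treat the softmax attention weights as the (approximate) posterior over the value-projected candidates, recognize the attention output $e^{l+1}_N=\sum_t w_t v_t$ as the posterior mean, and compare its Bayes risk to the MBR over the candidate set. The difference is in how the key inequality is executed, and here your version is actually the more careful one. The paper evaluates the risk of the consensus as $\langle\mathbb{E}[\boldsymbol{e}],\mathbb{E}[\boldsymbol{e}]\rangle=\|c\|^{2}$ and then asserts an inequality whose right-hand side, $\bigl\langle \mathbb{E}[\langle\boldsymbol{e},\hat{\boldsymbol{e}}\rangle],\hat{\boldsymbol{e}}\bigr\rangle$, is notationally garbled (a scalar placed inside an inner product); read literally with raw inner products rather than cosines, the claimed bound $\|c\|^{2}\ge\langle c,\hat{\boldsymbol{e}}\rangle$ is false in general for unit candidates (e.g.\ two orthogonal unit vectors with weights $0.9$ and $0.1$ give $\|c\|^{2}=0.82<0.9=\langle c,v_1\rangle$). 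Your normalization step fixes exactly this: under negative \emph{cosine} loss the risk of the consensus is $-\|c\|$, not $-\|c\|^{2}$, and Cauchy--Schwarz gives $\langle v_t,c\rangle\le\|c\|$ for every unit candidate, which is the correct two-line argument (your bias--variance identity gives the same conclusion and justifies calling $c$ the unrestricted optimum). You also make explicit the conventions the paper leaves implicit -- that the expectation is taken under the attention-weight distribution and that the value projections are taken as normalized -- and your equality condition (some candidate parallel to $c$) is a useful addition. In short: same route, but your write-up supplies the cosine normalization that the paper's own final inequality needs in order to be valid.
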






Lastly, we condition the policy head in the episodic memory from the current timestep to sample an action. This complete process resembles a memory reinstatement operation: a reminder procedure that reintroduces past elements in advance of a long-term retention test \citep{Rovee-Collier2012}. In our context, this ``long-term retention test" identifies the task and acts accordingly to maximize rewards.

\section{Experiments and Discussion}

In this section, we present an empirical validation of our method, comparing it with the current state-of-the-art methods. We considered high-dimensional, continuous control tasks for locomotion (MuJoCo\footnote{We highlight that both MuJoCo (Locomotion Tasks) and MetaWorld are built on the MuJoCo physics engine. We identify the set of locomotion tasks as solely for ``MuJoCo" to ensure simpler and concise writing during analysis of the results.}) and dexterous manipulation (MetaWorld). We describe them in Appendix \ref{ap:tasksdescription}. For reproducibility (source code and hyperparameters), we refer to the released source code at \url{https://github.com/luckeciano/transformers-metarl}.

\subsection{Experimental Setup}
\textbf{Meta-Training.} During meta-training, we repeatedly sampled a batch of tasks to collect experience with the goal of learning to learn. For each task, we ran a sequence of $E$ episodes. During the interaction, the agent conducted exploration with a gaussian policy. During optimization, we concatenate these episodes to form a single trajectory and we maximize the discounted cumulative reward of this trajectory. This is equivalent to the training setup for other on-policy meta-RL algorithms \citep{duan2016rl2, zintgraf2020varibad}. For these experiments, we considered $E = 2$. We performed this training via Proximal Policy Optimization (PPO) \citep{schulman2017proximal}, and the data batches mixed different tasks. Therefore, we present here an on-policy version of the TrMRL algorithm. To stabilize transformer training, we used the T-Fixup as a weight initialization scheme. 


\textbf{Meta-Testing.} During meta-testing, we sampled new tasks. These are different from the tasks in meta-training, but they come from the same distribution, except during Out-of-Distribution (OOD) evaluation. For TrMRL, in this stage, we froze all network parameters. For each task, we ran few episodes, performing the adaptation strategy. The goal is to identify the current MDP and maximize the cumulative reward across the episodes.

\textbf{Memory Write Logic.} At each timestep, we fed the network with the sequence of working memories. This process works as follows: at the beginning of an episode (when the memory sequence is empty), we start writing the first positions of the sequence until we fill all the slots. Then, for each new memory, we removed the oldest memory in the sequence (in the ``back” of this ``queue”) and added the most recent one (in the ``front”).

\begin{figure*}[t]
  \centering
\includegraphics[width=1.0\linewidth]{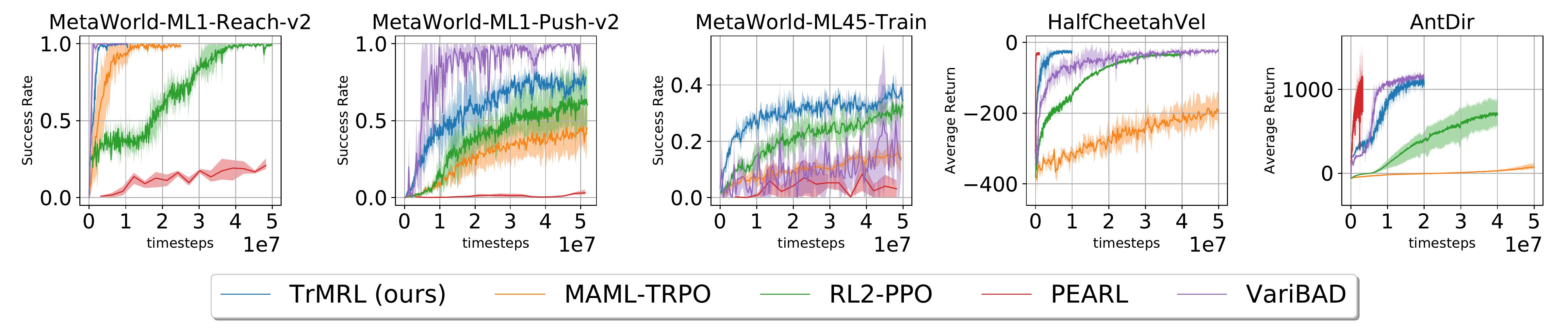}
  \caption{Meta-Training results for MetaWorld (success rate) and MuJoCo (average return) benchmarks. All subplots represent performance on test tasks over the training timesteps.}
  \label{fig:metaworld}
\end{figure*}

\begin{figure*}[!htpb]
  \centering
\includegraphics[width=0.6\textwidth]{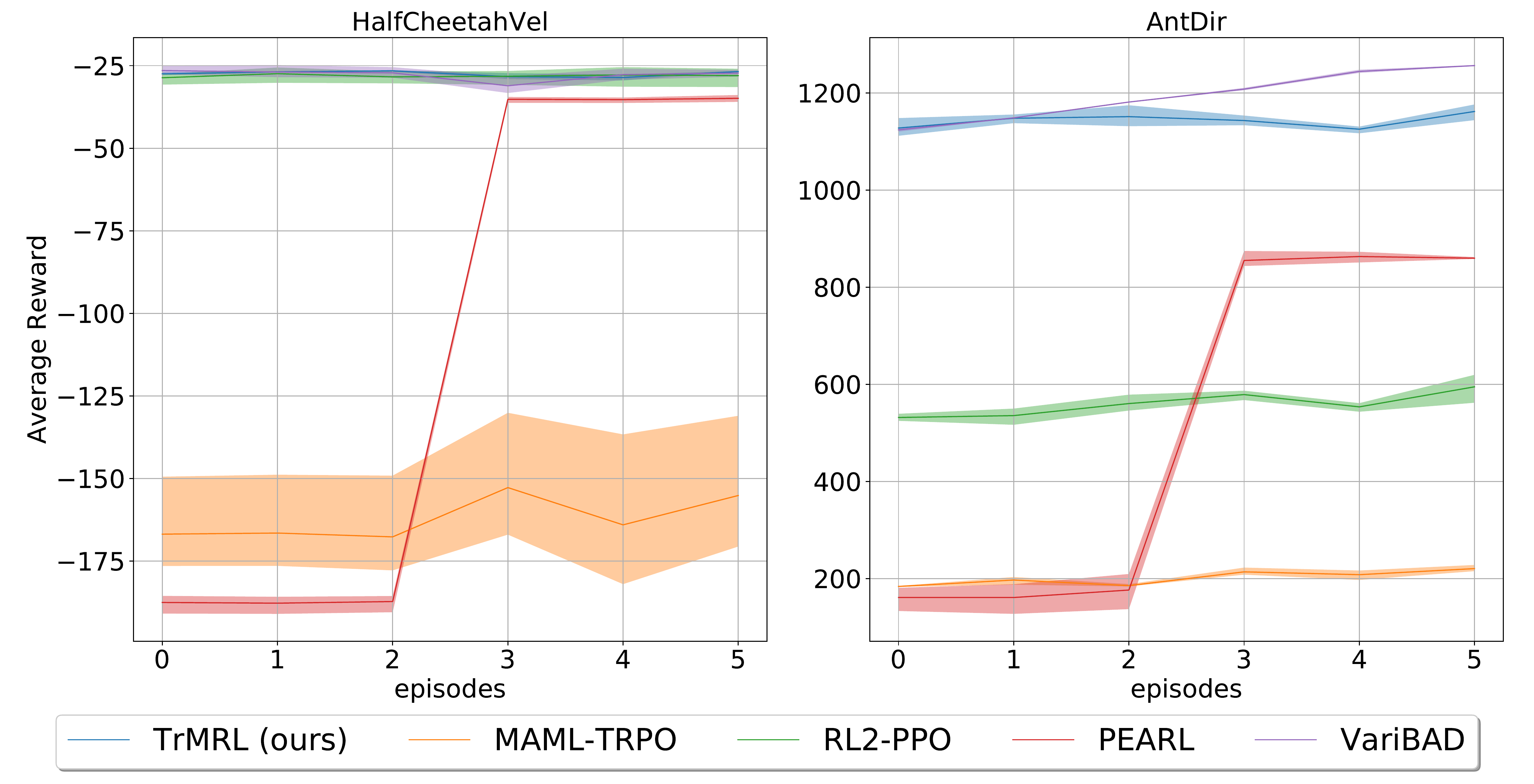}
  \caption{Fast adaptation results on MuJoCo locomotion tasks. Each curve represents the average performance over 20 test tasks. TrMRL presented high performance since the first episode due to the online adaptation nature from attention weights.}
  \label{fig:epadapt}
\end{figure*}

\textbf{Comparison Methods.} For comparison, we evaluated three different meta-RL baselines: RL$^2$ \cite{duan2016rl2}, optimized using PPO \cite{schulman2017proximal}; PEARL \citep{rakelly2019efficient}; MAML \citep{pmlr-v70-finn17a}, whose outer-loop used TRPO \citep{pmlr-v37-schulman15}; and VariBAD (\citep{zintgraf2020varibad}.


\subsection{Results and Analysis}

We compared TrMRL with baseline methods in terms of meta-training, episode adaptation, and OOD performance. We also present the latent visualization for TrMRL working memories and ablation studies. All the curves presented are averaged across three random seeds, with 95\% bootstrapped confidence intervals.


\textbf{Meta-Training Evaluation}. Figure \ref{fig:metaworld} shows the meta-training results for all the methods in the MetaWorld (success rates) and MuJoCo (average returns) environments. All subplots represent performance on test tasks over the training timesteps. TrMRL presented comparable or superior performance to the baseline methods. In scenarios where the task ambiguity is high, VariBAD presented stronger results, especially for Push-v2. In this context, ambiguity relates to the same working memories belonging to multiple different MDPs (as illustrated in Figure \ref{fig:tasksdistribution}). These results support the effectiveness of the VariBAD objective that incorporates task uncertainty directly during action selection \citep{zintgraf2020varibad}. Reach-v2 and AntDir also presented some level of ambiguity, but TrMRL is on par with these scenarios. For scenarios with less ambiguity, such as HalfCheetaVel, TrMRL is way more sample efficient than VariBAD. It is worth mentioning that VariBAD's training objective can be leveraged by other encoding methods (such as TrMRL) \citep{zintgraf2020varibad}, and we let this as a future line of work.

PEARL failed to explore and learn these robotic manipulation tasks: prior work already pointed out the difficulty of training PEARL’s task encoder for dexterous manipulation \citep{yu2021metaworld}. On the other side, it presented better results on locomotion tasks, especially in sample efficiency. This is because of its off-policy nature inherited from the Soft Actor-Critic framework \citep{pmlr-v80-haarnoja18b}. Compared with other on-policy methods (such as RL$^2$ and MAML), TrMRL significantly improved performance or sample efficiency.

Finally, for ``MetaWorld-ML45-Train" (a simplified version of ML45 benchmark where we evaluate on the training tasks), the more complex environment in this work, TrMRL achieved the best learning performance among the methods, supporting that the proposed method can generalize better across very different tasks in comparison with other methods. Additionally, VariBAD’s performance suggests that its proposed objective does not scale
well for this scenario, harming the final performance when compared with RL2. We hypothesize that the supervision from the dynamics and rewards in the training objective provides conflicting learning signals to the RNN.


\textbf{Fast Adaptation Evaluation}. A critical skill for meta-RL agents is the
capability of adapting to new tasks given a few episodes. We evaluate this by running meta-testing on 20 test tasks over six sequential episodes. Each agent runs its adaptation strategy to identify the task and maximize the reward across episodes. Figure \ref{fig:epadapt} presents the results for the locomotion tasks. TrMRL again presents a comparable or superior performance in comparison to the baselines.

We highlight that TrMRL presented high performance since the first episode. It only requires a few timesteps to achieve high performance in test tasks. In the HalfCheetahVel, for example, it only requires around 20 timesteps to achieve the best performance (Figure \ref{fig:onlineadaptation} in Appendix \ref{ap:onlineadaptation}). Therefore, it presents a nice property for online adaptation. Other methods, such as PEARL and MAML, do not present such property, as they need a few episodes before executing adaptation efficiently.


\begin{figure}[t]
    \includegraphics[width=0.6\textwidth, center]{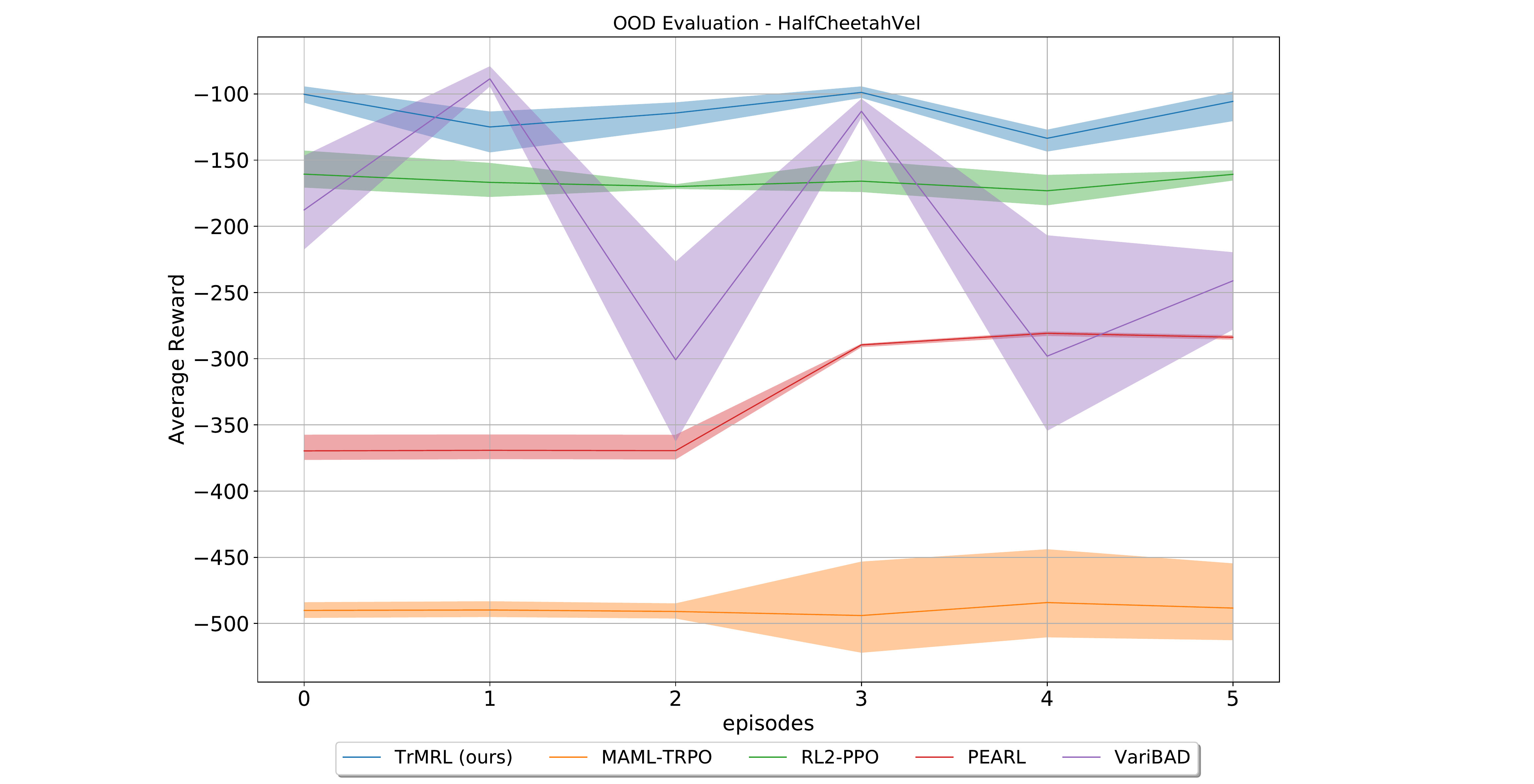}
  \caption{OOD Evaluation in HalfCheetahVel environment. TrMRL surpasses all the baselines methods with a good margin, suggesting that the context-dependent weights learned a robust adaptation strategy, while other methods memorized some aspects of the standard distribution of tasks. }
  \label{fig:ood}
\end{figure}

\textbf{OOD Evaluation}. Another critical scenario is how the fast adaptation strategies perform for out-of-distribution tasks. For this case, we change the HalfCheetahVel environment to sample OOD tasks during the meta-test. In the standard setting, both training and testing target velocities are sampled from a uniform distribution in the interval $[0.0, 3.0]$. In the OOD setting, we sampled 20 tasks in the interval $[3.0, 4.0]$ and assessed adaptation throughout the episodes. Figure \ref{fig:ood} presents the results. TrMRL surpasses all the baselines methods with a good margin, suggesting that the context-dependent weights learned a robust adaptation strategy, while other methods memorized some aspects of the standard distribution of tasks. We especially highlight PEARL, which achieved the best performance for locomotion tasks among the methods but performed poorly in this setting. VariBAD also presented unstable performance across the episodes. These results suggest that their task encoding mechanisms do not generate effective latent representations for OOD tasks.

\begin{figure}[t]
    \includegraphics[width=0.63\textwidth, center]{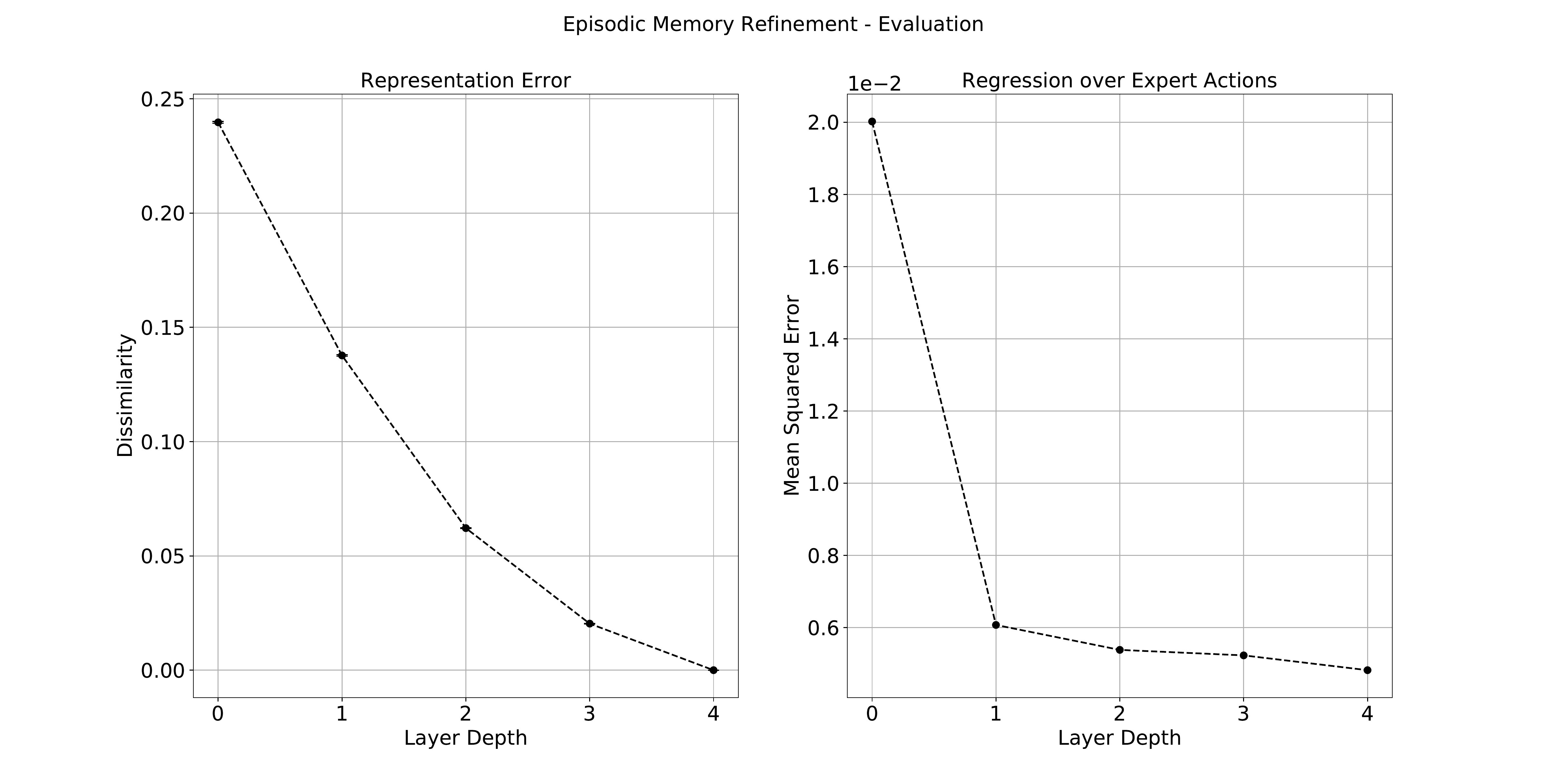}
  \caption{Episodic Memory Refinement Evaluation. We sampled 30 tasks from the HalfCheetahVel and interacted one of the trained policies to collect the episodic memories from each layer for 3 episodes per task. For the representation error, we computed the dissimilarity as $1.0$ minus the cosine similarity between the episodic memory and the final representation from the last layer. For the linear regression models, we trained using a set of 20 tasks and evaluated the mean squared error over a test set of 10 tasks.}
  \label{fig:emrefinement}
\end{figure}

\textbf{Episodic Memory Refinement}. We evaluated Theorem \ref{theor1} empirically in Figure \ref{fig:emrefinement}. In these two experiments, we interacted with one of the trained policies within the HalfCheetahVel environment and collected the episodic memories computed from each layer throughout a few episodes and across different tasks. Firstly, we computed the dissimilarity between the memories from each layer and the final representation from the last layer. We refer to this metric as the representation error. Next, we computed linear regression models over the “expert” actions sampled from the trained policy. We used the episodic memories from each layer as features and reported the mean squared error on a test set. We aim to evaluate how meaningful the representations from each layer are by predicting the best action given a very lightweight model.

Figure \ref{fig:emrefinement} shows that the representation error gradually decreases throughout the layers, suggesting the refinement of the episodic memory and the convergence to a consensus representation, as supported by Theorem \ref{theor1}. It also shows a correlated behavior in the regression error over the expert actions. This evidence suggests that the refinement of an episodic memory induces a consensus representation that is more meaningful to predict the best actions from the expert agent.

\section{Conclusion and Future Work}

In this work, we presented TrMRL, a memory-based meta-RL algorithm built upon a transformer, where the multi-head self-attention mechanism works as a fast adaptation strategy. We designed this network to resemble a memory reinstatement mechanism, associating past working memories to dynamically represent a task and recursively build an episode memory through layers. 

TrMRL demonstrated a valuable capability of learning from reward signals. On the other side, recent Language Models presented substantial improvements by designing self-supervised tasks \citep{devlin2019bert, raffel2020exploring} or even automating their generation \citep{DBLP:journals/corr/abs-2010-15980}. As future work, we aim to investigate how to enable these forms of self-supervision to leverage off-policy data collected during the training and further improve sample efficiency in transformers for the meta-RL scenario.



\bibliography{example_paper}

\begin{thebibliography}{64}
\providecommand{\natexlab}[1]{#1}
\providecommand{\url}[1]{\texttt{#1}}
\expandafter\ifx\csname urlstyle\endcsname\relax
  \providecommand{\doi}[1]{doi: #1}\else
  \providecommand{\doi}{doi: \begingroup \urlstyle{rm}\Url}\fi

\bibitem[Andrychowicz et~al.(2016)Andrychowicz, Denil, G\'{o}mez, Hoffman,
  Pfau, Schaul, Shillingford, and de~Freitas]{NIPS2016_fb875828}
Andrychowicz, M., Denil, M., G\'{o}mez, S., Hoffman, M.~W., Pfau, D., Schaul,
  T., Shillingford, B., and de~Freitas, N.
\newblock Learning to learn by gradient descent by gradient descent.
\newblock In Lee, D., Sugiyama, M., Luxburg, U., Guyon, I., and Garnett, R.
  (eds.), \emph{Advances in Neural Information Processing Systems}, volume~29.
  Curran Associates, Inc., 2016.
\newblock URL
  \url{https://proceedings.neurips.cc/paper/2016/file/fb87582825f9d28a8d42c5e5e5e8b23d-Paper.pdf}.

\bibitem[Ba et~al.(2016)Ba, Kiros, and Hinton]{ba2016layer}
Ba, J.~L., Kiros, J.~R., and Hinton, G.~E.
\newblock Layer normalization, 2016.

\bibitem[Baddeley(2010)]{BADDELEY2010R136}
Baddeley, A.
\newblock Working memory.
\newblock \emph{Current Biology}, 20\penalty0 (4):\penalty0 R136--R140, 2010.
\newblock ISSN 0960-9822.
\newblock \doi{https://doi.org/10.1016/j.cub.2009.12.014}.
\newblock URL
  \url{https://www.sciencedirect.com/science/article/pii/S0960982209021332}.

\bibitem[Bengio et~al.(1991)Bengio, Bengio, and Cloutier]{155621}
Bengio, Y., Bengio, S., and Cloutier, J.
\newblock Learning a synaptic learning rule.
\newblock In \emph{IJCNN-91-Seattle International Joint Conference on Neural
  Networks}, volume~ii, pp.\  969 vol.2--, 1991.
\newblock \doi{10.1109/IJCNN.1991.155621}.

\bibitem[Brown et~al.(2020{\natexlab{a}})Brown, Mann, Ryder, Subbiah, Kaplan,
  Dhariwal, Neelakantan, Shyam, Sastry, Askell, Agarwal, Herbert{-}Voss,
  Krueger, Henighan, Child, Ramesh, Ziegler, Wu, Winter, Hesse, Chen, Sigler,
  Litwin, Gray, Chess, Clark, Berner, McCandlish, Radford, Sutskever, and
  Amodei]{DBLP:journals/corr/abs-2005-14165}
Brown, T.~B., Mann, B., Ryder, N., Subbiah, M., Kaplan, J., Dhariwal, P.,
  Neelakantan, A., Shyam, P., Sastry, G., Askell, A., Agarwal, S.,
  Herbert{-}Voss, A., Krueger, G., Henighan, T., Child, R., Ramesh, A.,
  Ziegler, D.~M., Wu, J., Winter, C., Hesse, C., Chen, M., Sigler, E., Litwin,
  M., Gray, S., Chess, B., Clark, J., Berner, C., McCandlish, S., Radford, A.,
  Sutskever, I., and Amodei, D.
\newblock Language models are few-shot learners.
\newblock \emph{CoRR}, abs/2005.14165, 2020{\natexlab{a}}.
\newblock URL \url{https://arxiv.org/abs/2005.14165}.

\bibitem[Brown et~al.(2020{\natexlab{b}})Brown, Mann, Ryder, Subbiah, Kaplan,
  Dhariwal, Neelakantan, Shyam, Sastry, Askell, Agarwal, Herbert-Voss, Krueger,
  Henighan, Child, Ramesh, Ziegler, Wu, Winter, Hesse, Chen, Sigler, Litwin,
  Gray, Chess, Clark, Berner, McCandlish, Radford, Sutskever, and
  Amodei]{brown2020language}
Brown, T.~B., Mann, B., Ryder, N., Subbiah, M., Kaplan, J., Dhariwal, P.,
  Neelakantan, A., Shyam, P., Sastry, G., Askell, A., Agarwal, S.,
  Herbert-Voss, A., Krueger, G., Henighan, T., Child, R., Ramesh, A., Ziegler,
  D.~M., Wu, J., Winter, C., Hesse, C., Chen, M., Sigler, E., Litwin, M., Gray,
  S., Chess, B., Clark, J., Berner, C., McCandlish, S., Radford, A., Sutskever,
  I., and Amodei, D.
\newblock Language models are few-shot learners, 2020{\natexlab{b}}.

\bibitem[Caron et~al.(2021)Caron, Touvron, Misra, Jégou, Mairal, Bojanowski,
  and Joulin]{caron2021emerging}
Caron, M., Touvron, H., Misra, I., Jégou, H., Mairal, J., Bojanowski, P., and
  Joulin, A.
\newblock Emerging properties in self-supervised vision transformers, 2021.

\bibitem[Chen et~al.(2021)Chen, Lu, Rajeswaran, Lee, Grover, Laskin, Abbeel,
  Srinivas, and Mordatch]{chen2021decision}
Chen, L., Lu, K., Rajeswaran, A., Lee, K., Grover, A., Laskin, M., Abbeel, P.,
  Srinivas, A., and Mordatch, I.
\newblock Decision transformer: Reinforcement learning via sequence modeling,
  2021.

\bibitem[Chen et~al.(2017)Chen, Hoffman, Colmenarejo, Denil, Lillicrap,
  Botvinick, and de~Freitas]{pmlr-v70-chen17e}
Chen, Y., Hoffman, M.~W., Colmenarejo, S.~G., Denil, M., Lillicrap, T.~P.,
  Botvinick, M., and de~Freitas, N.
\newblock Learning to learn without gradient descent by gradient descent.
\newblock In Precup, D. and Teh, Y.~W. (eds.), \emph{Proceedings of the 34th
  International Conference on Machine Learning}, volume~70 of \emph{Proceedings
  of Machine Learning Research}, pp.\  748--756. PMLR, 06--11 Aug 2017.
\newblock URL \url{https://proceedings.mlr.press/v70/chen17e.html}.

\bibitem[Co-Reyes et~al.(2021)Co-Reyes, Miao, Peng, Real, Levine, Le, Lee, and
  Faust]{coreyes2021evolving}
Co-Reyes, J.~D., Miao, Y., Peng, D., Real, E., Levine, S., Le, Q.~V., Lee, H.,
  and Faust, A.
\newblock Evolving reinforcement learning algorithms, 2021.

\bibitem[Devlin et~al.(2019)Devlin, Chang, Lee, and Toutanova]{devlin2019bert}
Devlin, J., Chang, M.-W., Lee, K., and Toutanova, K.
\newblock Bert: Pre-training of deep bidirectional transformers for language
  understanding, 2019.

\bibitem[Dosovitskiy et~al.(2021{\natexlab{a}})Dosovitskiy, Beyer, Kolesnikov,
  Weissenborn, Zhai, Unterthiner, Dehghani, Minderer, Heigold, Gelly,
  Uszkoreit, and Houlsby]{dosovitskiy2021an}
Dosovitskiy, A., Beyer, L., Kolesnikov, A., Weissenborn, D., Zhai, X.,
  Unterthiner, T., Dehghani, M., Minderer, M., Heigold, G., Gelly, S.,
  Uszkoreit, J., and Houlsby, N.
\newblock An image is worth 16x16 words: Transformers for image recognition at
  scale.
\newblock In \emph{International Conference on Learning Representations},
  2021{\natexlab{a}}.
\newblock URL \url{https://openreview.net/forum?id=YicbFdNTTy}.

\bibitem[Dosovitskiy et~al.(2021{\natexlab{b}})Dosovitskiy, Beyer, Kolesnikov,
  Weissenborn, Zhai, Unterthiner, Dehghani, Minderer, Heigold, Gelly,
  Uszkoreit, and Houlsby]{dosovitskiy2021image}
Dosovitskiy, A., Beyer, L., Kolesnikov, A., Weissenborn, D., Zhai, X.,
  Unterthiner, T., Dehghani, M., Minderer, M., Heigold, G., Gelly, S.,
  Uszkoreit, J., and Houlsby, N.
\newblock An image is worth 16x16 words: Transformers for image recognition at
  scale, 2021{\natexlab{b}}.

\bibitem[Duan et~al.(2016)Duan, Schulman, Chen, Bartlett, Sutskever, and
  Abbeel]{duan2016rl2}
Duan, Y., Schulman, J., Chen, X., Bartlett, P.~L., Sutskever, I., and Abbeel,
  P.
\newblock Rl$^2$: Fast reinforcement learning via slow reinforcement learning,
  2016.

\bibitem[Fakoor et~al.(2020)Fakoor, Chaudhari, Soatto, and
  Smola]{fakoor2020metaqlearning}
Fakoor, R., Chaudhari, P., Soatto, S., and Smola, A.~J.
\newblock Meta-q-learning, 2020.

\bibitem[Finn et~al.(2017)Finn, Abbeel, and Levine]{pmlr-v70-finn17a}
Finn, C., Abbeel, P., and Levine, S.
\newblock Model-agnostic meta-learning for fast adaptation of deep networks.
\newblock In Precup, D. and Teh, Y.~W. (eds.), \emph{Proceedings of the 34th
  International Conference on Machine Learning}, volume~70 of \emph{Proceedings
  of Machine Learning Research}, pp.\  1126--1135. PMLR, 06--11 Aug 2017.
\newblock URL \url{https://proceedings.mlr.press/v70/finn17a.html}.

\bibitem[Finn et~al.(2018)Finn, Xu, and Levine]{10.5555/3327546.3327622}
Finn, C., Xu, K., and Levine, S.
\newblock Probabilistic model-agnostic meta-learning.
\newblock In \emph{Proceedings of the 32nd International Conference on Neural
  Information Processing Systems}, NIPS'18, pp.\  9537–9548, Red Hook, NY,
  USA, 2018. Curran Associates Inc.

\bibitem[Fortunato et~al.(2019)Fortunato, Tan, Faulkner, Hansen,
  Puigdom\`{e}nech~Badia, Buttimore, Deck, Leibo, and
  Blundell]{NEURIPS2019_02ed8122}
Fortunato, M., Tan, M., Faulkner, R., Hansen, S., Puigdom\`{e}nech~Badia, A.,
  Buttimore, G., Deck, C., Leibo, J.~Z., and Blundell, C.
\newblock Generalization of reinforcement learners with working and episodic
  memory.
\newblock In Wallach, H., Larochelle, H., Beygelzimer, A., d\textquotesingle
  Alch\'{e}-Buc, F., Fox, E., and Garnett, R. (eds.), \emph{Advances in Neural
  Information Processing Systems}, volume~32. Curran Associates, Inc., 2019.
\newblock URL
  \url{https://proceedings.neurips.cc/paper/2019/file/02ed812220b0705fabb868ddbf17ea20-Paper.pdf}.

\bibitem[Glorot \& Bengio(2010)Glorot and Bengio]{Glorot10understandingthe}
Glorot, X. and Bengio, Y.
\newblock Understanding the difficulty of training deep feedforward neural
  networks.
\newblock In \emph{In Proceedings of the International Conference on Artificial
  Intelligence and Statistics (AISTATS’10). Society for Artificial
  Intelligence and Statistics}, 2010.

\bibitem[Gupta et~al.(2018)Gupta, Mendonca, Liu, Abbeel, and
  Levine]{10.5555/3327345.3327436}
Gupta, A., Mendonca, R., Liu, Y., Abbeel, P., and Levine, S.
\newblock Meta-reinforcement learning of structured exploration strategies.
\newblock In \emph{Proceedings of the 32nd International Conference on Neural
  Information Processing Systems}, NIPS'18, pp.\  5307–5316, Red Hook, NY,
  USA, 2018. Curran Associates Inc.

\bibitem[Haarnoja et~al.(2018)Haarnoja, Zhou, Abbeel, and
  Levine]{pmlr-v80-haarnoja18b}
Haarnoja, T., Zhou, A., Abbeel, P., and Levine, S.
\newblock Soft actor-critic: Off-policy maximum entropy deep reinforcement
  learning with a stochastic actor.
\newblock In Dy, J. and Krause, A. (eds.), \emph{Proceedings of the 35th
  International Conference on Machine Learning}, volume~80 of \emph{Proceedings
  of Machine Learning Research}, pp.\  1861--1870. PMLR, 10--15 Jul 2018.
\newblock URL \url{https://proceedings.mlr.press/v80/haarnoja18b.html}.

\bibitem[He et~al.(2015)He, Zhang, Ren, and Sun]{he2015deep}
He, K., Zhang, X., Ren, S., and Sun, J.
\newblock Deep residual learning for image recognition, 2015.

\bibitem[Hoskin et~al.(2018)Hoskin, Bornstein, Norman, and Cohen]{Hoskin170720}
Hoskin, A.~N., Bornstein, A.~M., Norman, K.~A., and Cohen, J.~D.
\newblock Refresh my memory: Episodic memory reinstatements intrude on working
  memory maintenance.
\newblock \emph{bioRxiv}, 2018.
\newblock \doi{10.1101/170720}.
\newblock URL \url{https://www.biorxiv.org/content/early/2018/05/30/170720}.

\bibitem[Huang et~al.(2020)Huang, Perez, Ba, and Volkovs]{pmlr-v119-huang20f}
Huang, X.~S., Perez, F., Ba, J., and Volkovs, M.
\newblock Improving transformer optimization through better initialization.
\newblock In III, H.~D. and Singh, A. (eds.), \emph{Proceedings of the 37th
  International Conference on Machine Learning}, volume 119 of
  \emph{Proceedings of Machine Learning Research}, pp.\  4475--4483. PMLR,
  13--18 Jul 2020.
\newblock URL \url{https://proceedings.mlr.press/v119/huang20f.html}.

\bibitem[Hutter et~al.(2019)Hutter, Kotthoff, and Vanschoren]{10.5555/3360092}
Hutter, F., Kotthoff, L., and Vanschoren, J.
\newblock \emph{Automated Machine Learning: Methods, Systems, Challenges}.
\newblock Springer Publishing Company, Incorporated, 1st edition, 2019.
\newblock ISBN 3030053172.

\bibitem[Janner et~al.(2021)Janner, Li, and Levine]{janner2021reinforcement}
Janner, M., Li, Q., and Levine, S.
\newblock Reinforcement learning as one big sequence modeling problem, 2021.

\bibitem[Kingma \& Ba(2017)Kingma and Ba]{kingma2017adam}
Kingma, D.~P. and Ba, J.
\newblock Adam: A method for stochastic optimization, 2017.

\bibitem[Kumar \& Byrne(2004)Kumar and Byrne]{kumar-byrne-2004-minimum}
Kumar, S. and Byrne, W.
\newblock Minimum {B}ayes-risk decoding for statistical machine translation.
\newblock In \emph{Proceedings of the Human Language Technology Conference of
  the North {A}merican Chapter of the Association for Computational
  Linguistics: {HLT}-{NAACL} 2004}, pp.\  169--176, Boston, Massachusetts, USA,
  May 2 - May 7 2004. Association for Computational Linguistics.
\newblock URL \url{https://aclanthology.org/N04-1022}.

\bibitem[Li \& Malik(2016)Li and Malik]{li2016learning}
Li, K. and Malik, J.
\newblock Learning to optimize, 2016.

\bibitem[Liu et~al.(2020)Liu, Jiang, He, Chen, Liu, Gao, and
  Han]{liu2020variance}
Liu, L., Jiang, H., He, P., Chen, W., Liu, X., Gao, J., and Han, J.
\newblock On the variance of the adaptive learning rate and beyond, 2020.

\bibitem[Loynd et~al.(2020)Loynd, Fernandez, Celikyilmaz, Swaminathan, and
  Hausknecht]{loynd2020working}
Loynd, R., Fernandez, R., Celikyilmaz, A., Swaminathan, A., and Hausknecht, M.
\newblock Working memory graphs, 2020.

\bibitem[Melo et~al.(2019)Melo, Maximo, and da~Cunha]{melo2019bottomup}
Melo, L.~C., Maximo, M. R. O.~A., and da~Cunha, A.~M.
\newblock Bottom-up meta-policy search, 2019.

\bibitem[Mishra et~al.(2018)Mishra, Rohaninejad, Chen, and
  Abbeel]{mishra2018simple}
Mishra, N., Rohaninejad, M., Chen, X., and Abbeel, P.
\newblock A simple neural attentive meta-learner, 2018.

\bibitem[Nichol et~al.(2018)Nichol, Achiam, and Schulman]{nichol2018firstorder}
Nichol, A., Achiam, J., and Schulman, J.
\newblock On first-order meta-learning algorithms, 2018.

\bibitem[Norouzi et~al.(2016)Norouzi, Bengio, Chen, Jaitly, Schuster, Wu, and
  Schuurmans]{NIPS2016_2f885d0f}
Norouzi, M., Bengio, S., Chen, z., Jaitly, N., Schuster, M., Wu, Y., and
  Schuurmans, D.
\newblock Reward augmented maximum likelihood for neural structured prediction.
\newblock In Lee, D., Sugiyama, M., Luxburg, U., Guyon, I., and Garnett, R.
  (eds.), \emph{Advances in Neural Information Processing Systems}, volume~29.
  Curran Associates, Inc., 2016.
\newblock URL
  \url{https://proceedings.neurips.cc/paper/2016/file/2f885d0fbe2e131bfc9d98363e55d1d4-Paper.pdf}.

\bibitem[Oh et~al.(2020)Oh, Hessel, Czarnecki, Xu, van Hasselt, Singh, and
  Silver]{NEURIPS2020_0b96d81f}
Oh, J., Hessel, M., Czarnecki, W.~M., Xu, Z., van Hasselt, H.~P., Singh, S.,
  and Silver, D.
\newblock Discovering reinforcement learning algorithms.
\newblock In Larochelle, H., Ranzato, M., Hadsell, R., Balcan, M.~F., and Lin,
  H. (eds.), \emph{Advances in Neural Information Processing Systems},
  volume~33, pp.\  1060--1070. Curran Associates, Inc., 2020.
\newblock URL
  \url{https://proceedings.neurips.cc/paper/2020/file/0b96d81f0494fde5428c7aea243c9157-Paper.pdf}.

\bibitem[Ortega et~al.(2019)Ortega, Wang, Rowland, Genewein, Kurth-Nelson,
  Pascanu, Heess, Veness, Pritzel, Sprechmann, Jayakumar, McGrath, Miller,
  Azar, Osband, Rabinowitz, György, Chiappa, Osindero, Teh, van Hasselt,
  de~Freitas, Botvinick, and Legg]{ortega2019metalearning}
Ortega, P.~A., Wang, J.~X., Rowland, M., Genewein, T., Kurth-Nelson, Z.,
  Pascanu, R., Heess, N., Veness, J., Pritzel, A., Sprechmann, P., Jayakumar,
  S.~M., McGrath, T., Miller, K., Azar, M., Osband, I., Rabinowitz, N.,
  György, A., Chiappa, S., Osindero, S., Teh, Y.~W., van Hasselt, H.,
  de~Freitas, N., Botvinick, M., and Legg, S.
\newblock Meta-learning of sequential strategies, 2019.

\bibitem[Parisotto et~al.(2019)Parisotto, Song, Rae, Pascanu, Gulcehre,
  Jayakumar, Jaderberg, Kaufman, Clark, Noury, Botvinick, Heess, and
  Hadsell]{parisotto2019stabilizing}
Parisotto, E., Song, H.~F., Rae, J.~W., Pascanu, R., Gulcehre, C., Jayakumar,
  S.~M., Jaderberg, M., Kaufman, R.~L., Clark, A., Noury, S., Botvinick, M.~M.,
  Heess, N., and Hadsell, R.
\newblock Stabilizing transformers for reinforcement learning, 2019.

\bibitem[Raffel et~al.(2020)Raffel, Shazeer, Roberts, Lee, Narang, Matena,
  Zhou, Li, and Liu]{raffel2020exploring}
Raffel, C., Shazeer, N., Roberts, A., Lee, K., Narang, S., Matena, M., Zhou,
  Y., Li, W., and Liu, P.~J.
\newblock Exploring the limits of transfer learning with a unified text-to-text
  transformer, 2020.

\bibitem[Rakelly et~al.(2019)Rakelly, Zhou, Quillen, Finn, and
  Levine]{rakelly2019efficient}
Rakelly, K., Zhou, A., Quillen, D., Finn, C., and Levine, S.
\newblock Efficient off-policy meta-reinforcement learning via probabilistic
  context variables, 2019.

\bibitem[Ritter et~al.(2018{\natexlab{a}})Ritter, Wang, Kurth-Nelson,
  Jayakumar, Blundell, Pascanu, and Botvinick]{pmlr-v80-ritter18a}
Ritter, S., Wang, J., Kurth-Nelson, Z., Jayakumar, S., Blundell, C., Pascanu,
  R., and Botvinick, M.
\newblock Been there, done that: Meta-learning with episodic recall.
\newblock In Dy, J. and Krause, A. (eds.), \emph{Proceedings of the 35th
  International Conference on Machine Learning}, volume~80 of \emph{Proceedings
  of Machine Learning Research}, pp.\  4354--4363. PMLR, 10--15 Jul
  2018{\natexlab{a}}.
\newblock URL \url{https://proceedings.mlr.press/v80/ritter18a.html}.

\bibitem[Ritter et~al.(2018{\natexlab{b}})Ritter, Wang, Kurth-Nelson, and
  Botvinick]{DBLP:conf/cogsci/RitterWKB18}
Ritter, S., Wang, J.~X., Kurth-Nelson, Z., and Botvinick, M.
\newblock Episodic control through meta-reinforcement learning.
\newblock In \emph{CogSci}, 2018{\natexlab{b}}.
\newblock URL \url{https://mindmodeling.org/cogsci2018/papers/0190/index.html}.

\bibitem[Ritter et~al.(2018{\natexlab{c}})Ritter, Wang, Kurth-Nelson,
  Jayakumar, Blundell, Pascanu, and Botvinick]{ritter2018there}
Ritter, S., Wang, J.~X., Kurth-Nelson, Z., Jayakumar, S.~M., Blundell, C.,
  Pascanu, R., and Botvinick, M.
\newblock Been there, done that: Meta-learning with episodic recall,
  2018{\natexlab{c}}.

\bibitem[Ritter et~al.(2021)Ritter, Faulkner, Sartran, Santoro, Botvinick, and
  Raposo]{ritter2021rapid}
Ritter, S., Faulkner, R., Sartran, L., Santoro, A., Botvinick, M., and Raposo,
  D.
\newblock Rapid task-solving in novel environments, 2021.

\bibitem[Rothfuss et~al.(2018)Rothfuss, Lee, Clavera, Asfour, and
  Abbeel]{rothfuss2018promp}
Rothfuss, J., Lee, D., Clavera, I., Asfour, T., and Abbeel, P.
\newblock Promp: Proximal meta-policy search, 2018.

\bibitem[Rovee-Collier(2012)]{Rovee-Collier2012}
Rovee-Collier, C.
\newblock \emph{Reinstatement of Learning}, pp.\  2803--2805.
\newblock Springer US, Boston, MA, 2012.
\newblock ISBN 978-1-4419-1428-6.
\newblock \doi{10.1007/978-1-4419-1428-6_346}.
\newblock URL \url{https://doi.org/10.1007/978-1-4419-1428-6_346}.

\bibitem[Schmidhuber et~al.(1996)Schmidhuber, Zhao, and
  Wiering]{10.5555/870510}
Schmidhuber, J., Zhao, J., and Wiering, M.
\newblock Simple principles of metalearning.
\newblock Technical report, 1996.

\bibitem[Schulman et~al.(2015)Schulman, Levine, Abbeel, Jordan, and
  Moritz]{pmlr-v37-schulman15}
Schulman, J., Levine, S., Abbeel, P., Jordan, M., and Moritz, P.
\newblock Trust region policy optimization.
\newblock In Bach, F. and Blei, D. (eds.), \emph{Proceedings of the 32nd
  International Conference on Machine Learning}, volume~37 of \emph{Proceedings
  of Machine Learning Research}, pp.\  1889--1897, Lille, France, 07--09 Jul
  2015. PMLR.
\newblock URL \url{https://proceedings.mlr.press/v37/schulman15.html}.

\bibitem[Schulman et~al.(2017)Schulman, Wolski, Dhariwal, Radford, and
  Klimov]{schulman2017proximal}
Schulman, J., Wolski, F., Dhariwal, P., Radford, A., and Klimov, O.
\newblock Proximal policy optimization algorithms, 2017.

\bibitem[Shin et~al.(2020)Shin, Razeghi, IV, Wallace, and
  Singh]{DBLP:journals/corr/abs-2010-15980}
Shin, T., Razeghi, Y., IV, R. L.~L., Wallace, E., and Singh, S.
\newblock Autoprompt: Eliciting knowledge from language models with
  automatically generated prompts.
\newblock \emph{CoRR}, abs/2010.15980, 2020.
\newblock URL \url{https://arxiv.org/abs/2010.15980}.

\bibitem[Thrun \& Pratt(1998)Thrun and Pratt]{10.5555/296635.296639}
Thrun, S. and Pratt, L.
\newblock \emph{Learning to Learn: Introduction and Overview}, pp.\  3–17.
\newblock Kluwer Academic Publishers, USA, 1998.
\newblock ISBN 0792380479.

\bibitem[Todorov et~al.(2012)Todorov, Erez, and Tassa]{6386109}
Todorov, E., Erez, T., and Tassa, Y.
\newblock Mujoco: A physics engine for model-based control.
\newblock In \emph{2012 IEEE/RSJ International Conference on Intelligent Robots
  and Systems}, pp.\  5026--5033, 2012.
\newblock \doi{10.1109/IROS.2012.6386109}.

\bibitem[Tulving(2002)]{tulving2002}
Tulving, E.
\newblock Episodic memory: From mind to brain.
\newblock \emph{Annual review of psychology}, 53:\penalty0 1--25, 02 2002.
\newblock \doi{10.1146/annurev.psych.53.100901.135114}.

\bibitem[Vaswani et~al.(2017)Vaswani, Shazeer, Parmar, Uszkoreit, Jones, Gomez,
  Kaiser, and Polosukhin]{10.5555/3295222.3295349}
Vaswani, A., Shazeer, N., Parmar, N., Uszkoreit, J., Jones, L., Gomez, A.~N.,
  Kaiser, L., and Polosukhin, I.
\newblock Attention is all you need.
\newblock In \emph{Proceedings of the 31st International Conference on Neural
  Information Processing Systems}, NIPS'17, pp.\  6000–6010, Red Hook, NY,
  USA, 2017. Curran Associates Inc.
\newblock ISBN 9781510860964.

\bibitem[Vinyals et~al.(2016)Vinyals, Blundell, Lillicrap, kavukcuoglu, and
  Wierstra]{NIPS2016_90e13578}
Vinyals, O., Blundell, C., Lillicrap, T., kavukcuoglu, k., and Wierstra, D.
\newblock Matching networks for one shot learning.
\newblock In Lee, D., Sugiyama, M., Luxburg, U., Guyon, I., and Garnett, R.
  (eds.), \emph{Advances in Neural Information Processing Systems}, volume~29.
  Curran Associates, Inc., 2016.
\newblock URL
  \url{https://proceedings.neurips.cc/paper/2016/file/90e1357833654983612fb05e3ec9148c-Paper.pdf}.

\bibitem[Wang et~al.(2017)Wang, Kurth-Nelson, Tirumala, Soyer, Leibo, Munos,
  Blundell, Kumaran, and Botvinick]{wang2017learning}
Wang, J.~X., Kurth-Nelson, Z., Tirumala, D., Soyer, H., Leibo, J.~Z., Munos,
  R., Blundell, C., Kumaran, D., and Botvinick, M.
\newblock Learning to reinforcement learn, 2017.

\bibitem[Wang et~al.(2018)Wang, Kurth-Nelson, Kumaran, Tirumala, Soyer, Leibo,
  Hassabis, and Botvinick]{Wang295964}
Wang, J.~X., Kurth-Nelson, Z., Kumaran, D., Tirumala, D., Soyer, H., Leibo,
  J.~Z., Hassabis, D., and Botvinick, M.
\newblock Prefrontal cortex as a meta-reinforcement learning system.
\newblock \emph{bioRxiv}, 2018.
\newblock \doi{10.1101/295964}.
\newblock URL \url{https://www.biorxiv.org/content/early/2018/04/13/295964}.

\bibitem[Xiong et~al.(2020)Xiong, Yang, He, Zheng, Zheng, Xing, Zhang, Lan,
  Wang, and Liu]{xiong2020layer}
Xiong, R., Yang, Y., He, D., Zheng, K., Zheng, S., Xing, C., Zhang, H., Lan,
  Y., Wang, L., and Liu, T.-Y.
\newblock On layer normalization in the transformer architecture, 2020.

\bibitem[Yu et~al.(2021)Yu, Quillen, He, Julian, Narayan, Shively, Bellathur,
  Hausman, Finn, and Levine]{yu2021metaworld}
Yu, T., Quillen, D., He, Z., Julian, R., Narayan, A., Shively, H., Bellathur,
  A., Hausman, K., Finn, C., and Levine, S.
\newblock Meta-world: A benchmark and evaluation for multi-task and meta
  reinforcement learning, 2021.

\bibitem[Yuan et~al.(2021)Yuan, Chen, Wang, Yu, Shi, Jiang, Tay, Feng, and
  Yan]{yuan2021tokenstotoken}
Yuan, L., Chen, Y., Wang, T., Yu, W., Shi, Y., Jiang, Z., Tay, F.~E., Feng, J.,
  and Yan, S.
\newblock Tokens-to-token vit: Training vision transformers from scratch on
  imagenet, 2021.

\bibitem[Zilli \& Hasselmo(2008)Zilli and Hasselmo]{zilli2008modeling}
Zilli, E.~A. and Hasselmo, M.~E.
\newblock Modeling the role of working memory and episodic memory in behavioral
  tasks.
\newblock \emph{Hippocampus}, 18\penalty0 (2):\penalty0 193--209, 2008.

\bibitem[Zintgraf et~al.(2020)Zintgraf, Shiarlis, Igl, Schulze, Gal, Hofmann,
  and Whiteson]{zintgraf2020varibad}
Zintgraf, L., Shiarlis, K., Igl, M., Schulze, S., Gal, Y., Hofmann, K., and
  Whiteson, S.
\newblock Varibad: A very good method for bayes-adaptive deep rl via
  meta-learning, 2020.

\bibitem[Zintgraf et~al.(2019)Zintgraf, Shiarlis, Kurin, Hofmann, and
  Whiteson]{zintgraf2019fast}
Zintgraf, L.~M., Shiarlis, K., Kurin, V., Hofmann, K., and Whiteson, S.
\newblock Fast context adaptation via meta-learning, 2019.

\bibitem[Zoph \& Le(2017)Zoph and Le]{45826}
Zoph, B. and Le, Q.~V.
\newblock Neural architecture search with reinforcement learning.
\newblock 2017.
\newblock URL \url{https://arxiv.org/abs/1611.01578}.

\end{thebibliography}
\bibliographystyle{icml2022}

\newpage
\appendix
\onecolumn

\section{Reproducibility Statement}

\textbf{Code Release}. To ensure the reproducibility of our research, we released all the source code associated with our models and experimental pipeline. We refer to the supplementary material of this submission. It also includes the hyperparameters and the scripts to execute all the scenarios presented in this paper. 

\textbf{Baselines Reproducibility}. We strictly reproduced the results from prior work implementations for baselines, and we provide their open-source repositories for reference.

\textbf{Proof of Theoretical Results and Pseudocode}. We provide a detailed proof of Theorem \ref{theor1} in Appendix \ref{ap:theorproof}, containing all assumptions considered. We also provide pseudocode from TrMRL's agent to improve clarity on the proposed method.

\textbf{Availability of all simulation environments}. All environments used in this work are freely available and open-source.

\section{Metal-RL Environments Description}\label{ap:tasksdescription}

In this Appendix, we detail the environments considered in this work.

\subsection{MuJoCo -- Locomotion Tasks}

This benchmark is a set of locomotion tasks on the MuJoCo \citep{6386109} environment. It comprises different bodies, and each environment provides different tasks with different learning goals. These locomotion tasks are previously introduced by \citet{pmlr-v70-finn17a} and \citet{rothfuss2018promp}. We considered 2 different environments. 

\begin{itemize}
 \item \textbf{AntDir}: This environment has an ant body, and the goal is to move forward or backward. Hence, it presents these 2 tasks.
 \item \textbf{HalfCheetalVel}: This environment also has a half cheetah body, and the goal is to achieve a target velocity running forward. This target velocity comes from a continuous uniform distribution.
\end{itemize}

These locomotion task families require adaptation across reward functions.

\subsection{MetaWorld}

The MetaWorld \citep{yu2021metaworld} benchmark contains a diverse set of manipulation tasks designed for multi-task RL and meta-RL settings. MetaWorld presents a variety of evaluation modes. Here, we describe the ML1 mode used in this work. For more detailed description of the benchmark, we refer to \citet{yu2021metaworld}.

\begin{itemize}
    \item \textbf{ML1}: This scenario considers a single robotic manipulation task but varies the goal. The meta-training “tasks” corresponds to 50 random initial object and goal positions, and meta-testing on 50 heldout positions.
    \item \textbf{ML45}: With the objective of testing generalization to new manipulation tasks, the benchmark provides 45 training tasks and holds out 5 meta-testing tasks. Given the difficulty of such benchmark, in this work, we adopted a simplified version, ``ML45-Train", where we also evaluate the performance of the methods in the 45 training tasks.
    
\end{itemize}

These robotic manipulation task families require adaptation across reward functions and dynamics.



\newpage 

\section{Working Memories Latent Visualization}


Figure \ref{fig:latent} presents a 3-D view of the working memories from the HalfCheetahVel environment. We sampled some tasks (target velocities) and collected working memories during the meta-test setting. We observe that this embedding space learns a representation of each MDP as a distribution over the working memories, as suggested in Section \ref{sec:methodology}. In this visualization, we can draw planes that approximately distinguish these tasks. Working memories that cross this boundary represent the ambiguity between two tasks. Furthermore, this representation also learns the similarity of tasks: for example, the cluster of working memories for target velocity $v = 1.0$ is between the clusters for $v = 0.5$ and $v = 1.5$. This property induces knowledge sharing among all the tasks, which suggests the sample efficiency behind TrMRL meta-training.

\begin{figure}[!htpb]
  \centering
\includegraphics[width=1.0\linewidth]{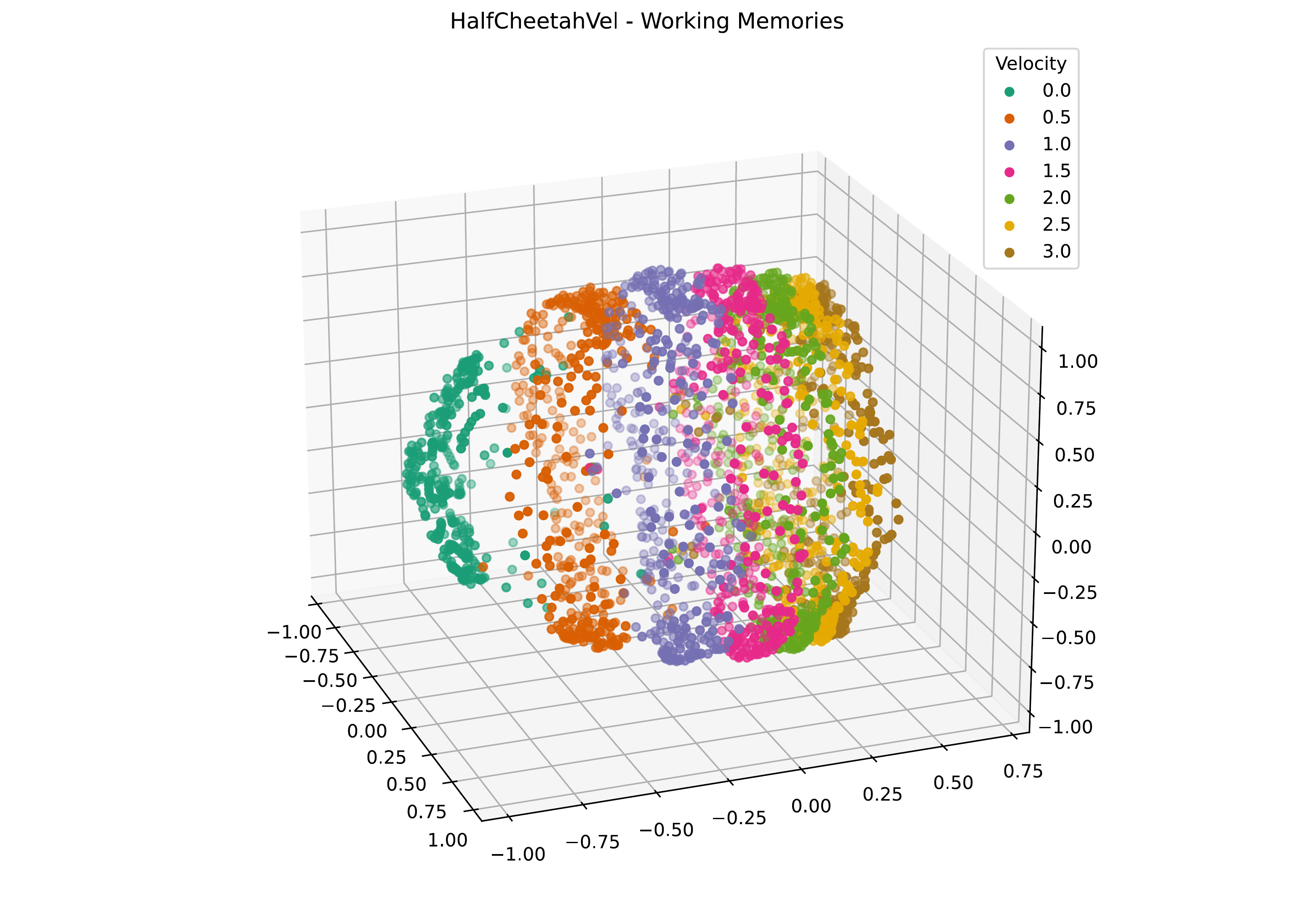}
  \caption{3-D Latent visualization of the working memories for the HalfCheetahVel environment. We plotted the 3 most relevant components from PCA. TrMRL learns a representation of each MDP as a distribution over the working memories. This representation distinguishes the tasks and approximates similar tasks, which helps knowledge sharing among them.}
  \label{fig:latent}
\end{figure}

\newpage 
\section{Online Adaptation}\label{ap:onlineadaptation}

In this Appendix, we highlight that TrMRL presented high performance since the first episode. In fact, it only requires a few timesteps to achieve high performance in test tasks. Figure \ref{fig:onlineadaptation} shows that it only requires around 20 timesteps to achieve the best performance in the HalfCheetahVel environment. Therefore, it presents a nice property for online adaptation. This is because the self-attention mechanism is lightweight and only requires a few working memories to achieve good performance. Hence, we can run it efficiently at each timestep.  Other methods, such as PEARL and MAML, do not present such property, and they need a few episodes before executing adaptation efficiently.

\begin{figure}[!htpb]
    \includegraphics[width=1.0\textwidth, center]{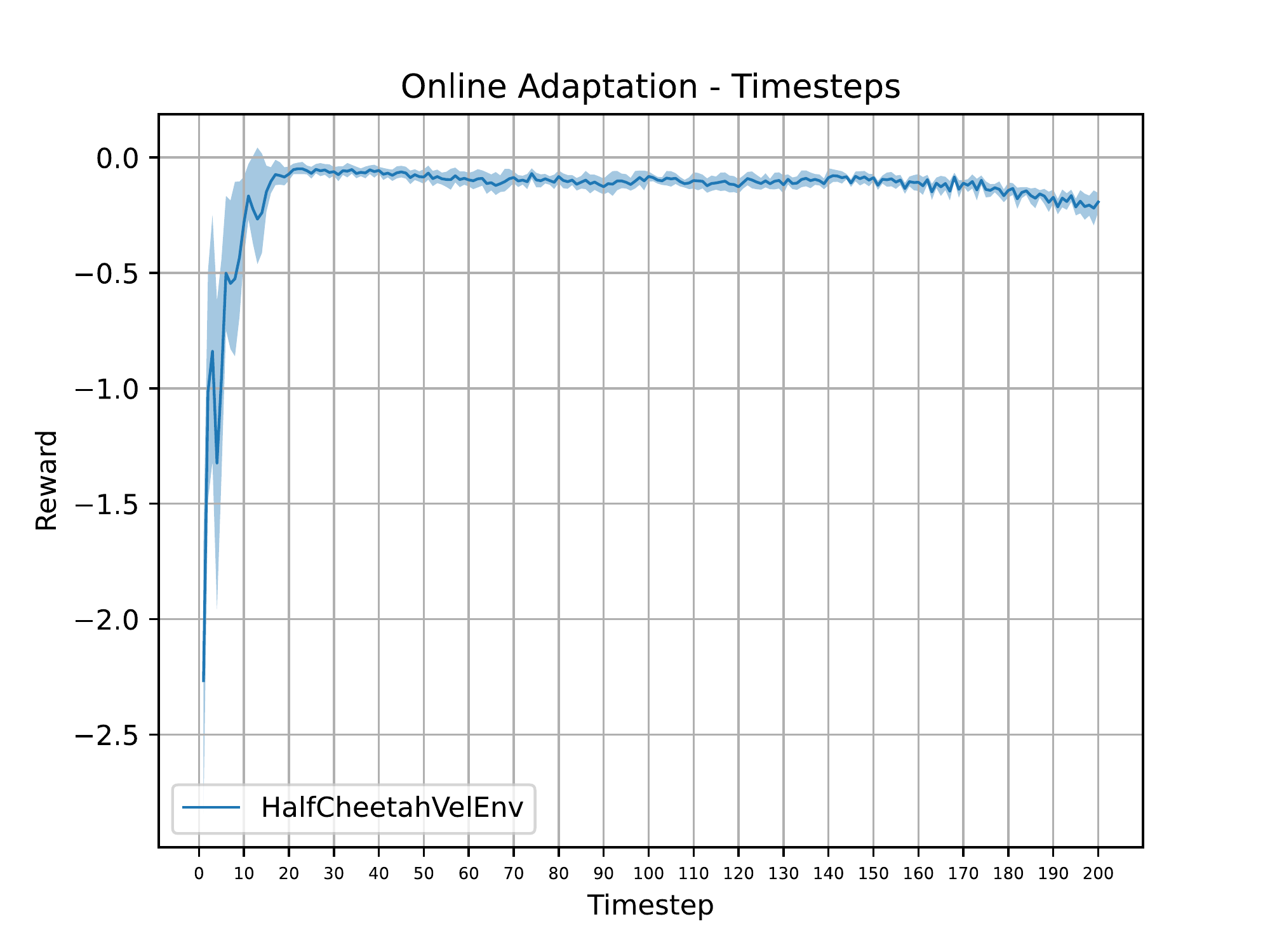}
  \caption{TrMRL's adaptation for HalfCheetahVel environment.}
  \label{fig:onlineadaptation}
\end{figure}

\newpage

\section{Ablation Study}

In this section, we present an ablation study regarding the main components of TrMRL to identify how they affect the performance of the learned agents. For all the scenarios, we considered one environment for each benchmark to represent both locomotion (HalfCheetahVel) and dexterous manipulation (MetaWorld-ML1-Reach-v2). We evaluated the meta-training phase so that we could analyze both sample efficiency and asymptotic performance.

\subsection{T-Fixup}

In this work, we employed T-Fixup to address the instability from the early stages of transformer training, given the reasons described in Section \ref{sec:tfixupinit}. In RL, the early stages of training are also the moment when the learning policies are more exploratory to cover the state and action spaces better and discover rewards, preventing the convergence to sub-optimal policies. Hence, it is crucial for RL that the transformer policy learns appropriately since the beginning to drive exploration.

This section evaluated how T-Fixup becomes essential for environments where the learned behaviors must guide exploration to prevent poor policies. For this, we present T-Fixup ablation (Figure \ref{fig:ablationfixup}) for two settings: MetaWorld-ML1-Reach-v2 and HalfCheetahVel.
For the reach environment, we compute the reward distribution using the distance between the gripper and the target location. Hence, it is always a dense and informative signal: even a random policy can easily explore the environment, and T-Fixup does not interfere with the learning curve. On the other side, HalfCheetahVel requires a functional locomotion gate to drive exploration; otherwise, it can get stuck with low rewards (e.g., cheetah is exploring while fallen). In this scenario, T-Fixup becomes crucial to prevent unstable learning updates that could collapse the learning policy to poor behaviors.


\begin{figure}[!htpb]
  \centering
\includegraphics[width=1.0\linewidth]{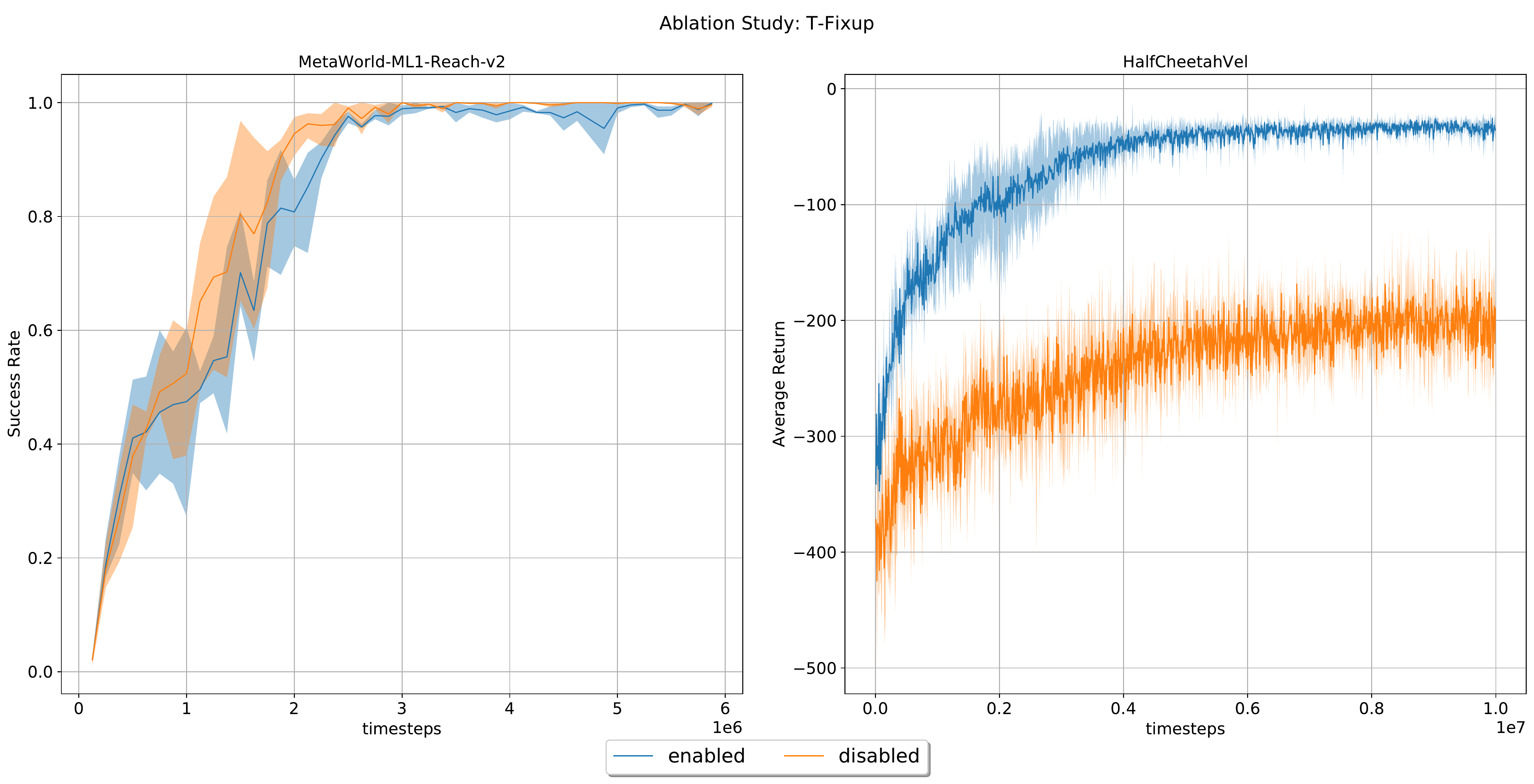}
  \caption{Ablation results for the T-Fixup component.}
  \label{fig:ablationfixup}
\end{figure}

\subsection{Working Memory Sequence Length}



A meta-RL agent requires a sequence of interactions to identify the running task and act accordingly. The length of this sequence $N$ should be large enough to address the ambiguity associated with the set of tasks, but not too long to make the transformer optimization harder and less sample efficient. In this ablation, we study two environments that present different levels of ambiguity and show that they also require different lengths to achieve optimal sample efficiency.

We first analyze MetaWorld-ML1-Reach-v2. The environment defines each target location in the 3D space as a task. The associated reward is the distance between the gripper and this target. Hence, at each timestep, the reward is ambiguous for all the tasks located on the sphere's surface with the center in the gripper position. This suggests that the agent will benefit from long sequences. Figure \ref{fig:ablationwm} (left) confirms this hypothesis, as the sample efficiency improves until sequences with several timesteps (N = 50).

The HalfCheetahVel environment defines each forward velocity as a different task. The associated reward depends on the difference between the current cheetah velocity and this target.
Hence, at each timestep, the emitted reward is ambiguous only for two possible tasks. To identify the current task, the agent needs to estimate its velocity (which requires a few timesteps) and then disambiguate between these two tasks. This suggests that the agent will not benefit from very long sequences. Figure \ref{fig:ablationwm} (right) confirms this hypothesis: there is an improvement in sample efficiency from N = 1 to N = 5, but it decreases for longer sequences as the training becomes harder (it is worth mentioning that all evaluated policies achieved the best asymptotic performance).

\begin{figure}[!htpb]
  \centering
\includegraphics[width=1.0\linewidth]{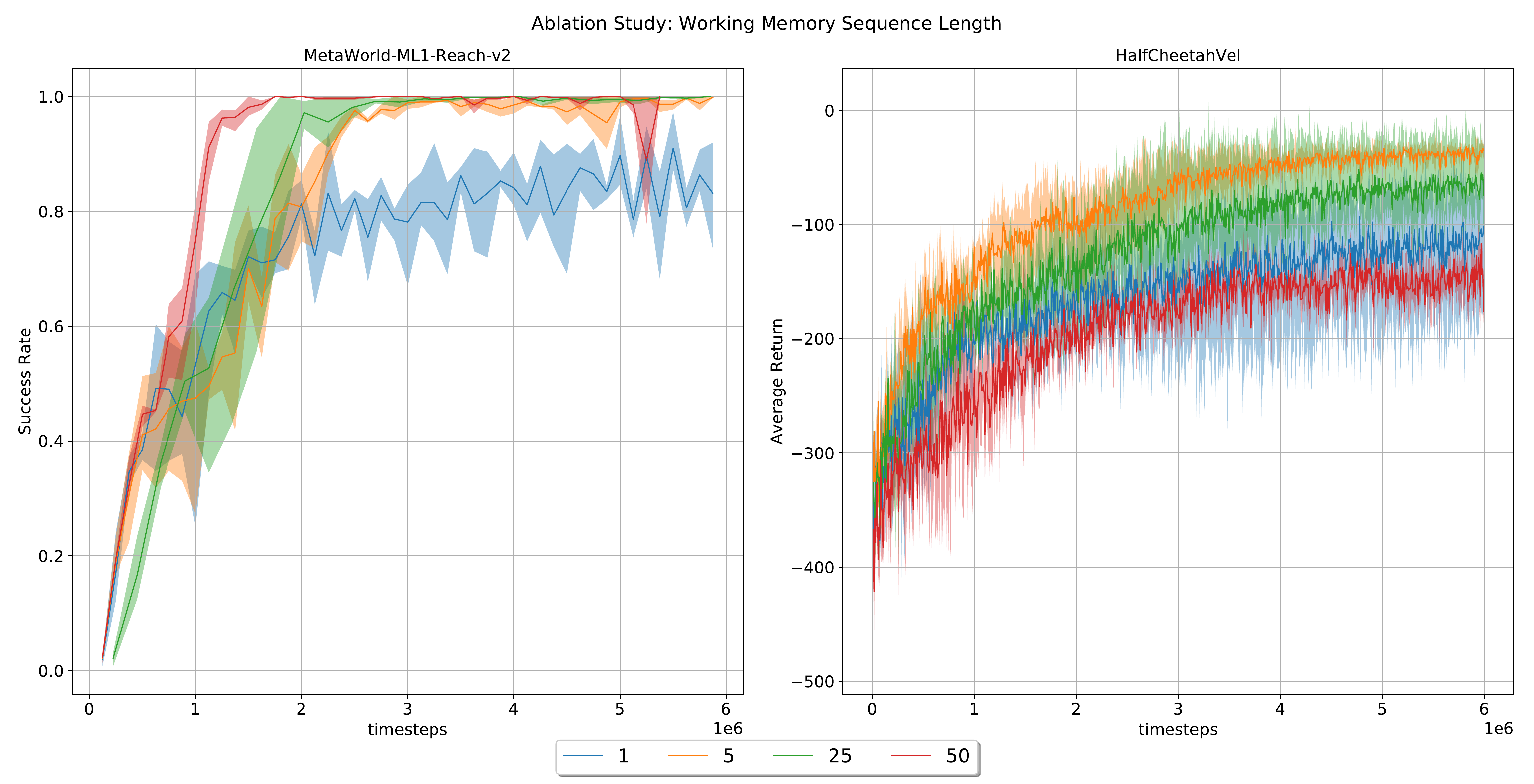}
  \caption{Ablation results for the working memory sequence length.}
  \label{fig:ablationwm}
\end{figure}

\subsection{Number of Layers}

Another important component is the network depth. In Section \ref{sec:methodology}, we hypothesized that more layers would help to recursively build a more meaningful version of the episodic memory since we interact with output memories from the past layer and mitigates the bias effect from the task representations. Figure \ref{fig:ablationlayers} shows how TrMRL behaves according to the number of layers. We observe a similar pattern to the previous ablation case. For Reach-v2, more layers improved the performance by reducing the effect of ambiguity and biased task representations. For HalfCheetaVel, we can see an improvement from a single layer to 4 or 8 layers, but for 12 layers, sample efficiency starts to decrease. On the other hand, we highlight that even for a deep network with 12 layers, we have a stable optimization procedure, showing the effectiveness of the T-Fixup initialization.

\begin{figure}[!htpb]
  \centering
\includegraphics[width=1.0\linewidth]{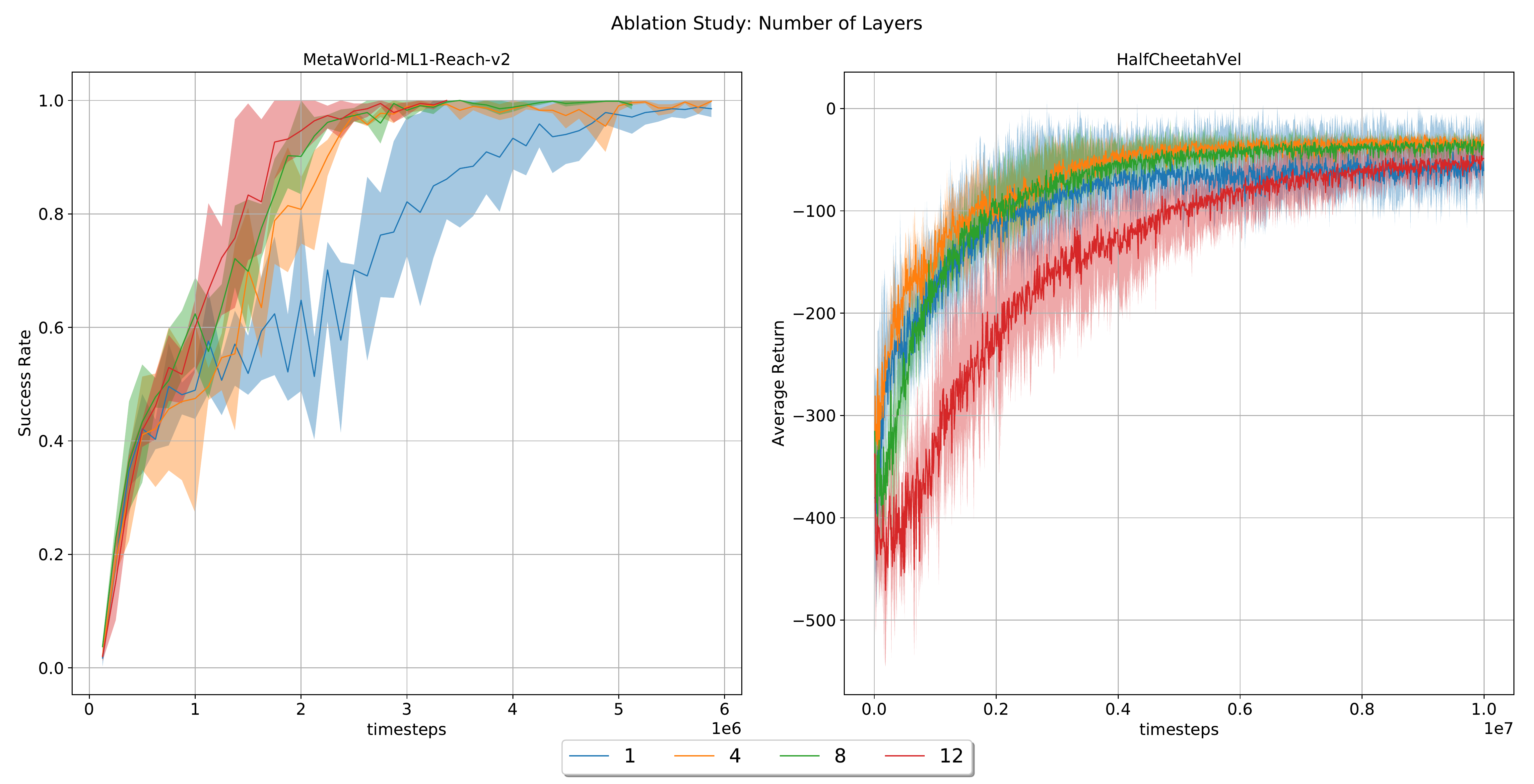}
  \caption{Ablation study for the number of transformer layers.}
  \label{fig:ablationlayers}
\end{figure}

\subsection{Number of Attention Heads}

The last ablation case relates to the number of attention heads in each MHSA block. We hypothesized that multiples heads would diversify working memory representation and improve network expressivity. Nevertheless, Figure \ref{fig:ablationnheads} shows that more heads slightly increased the performance in HalfCheetahVel and did not interfere in Reach-v2 significantly.

\begin{figure}[!htpb]
  \centering
\includegraphics[width=1.0\linewidth]{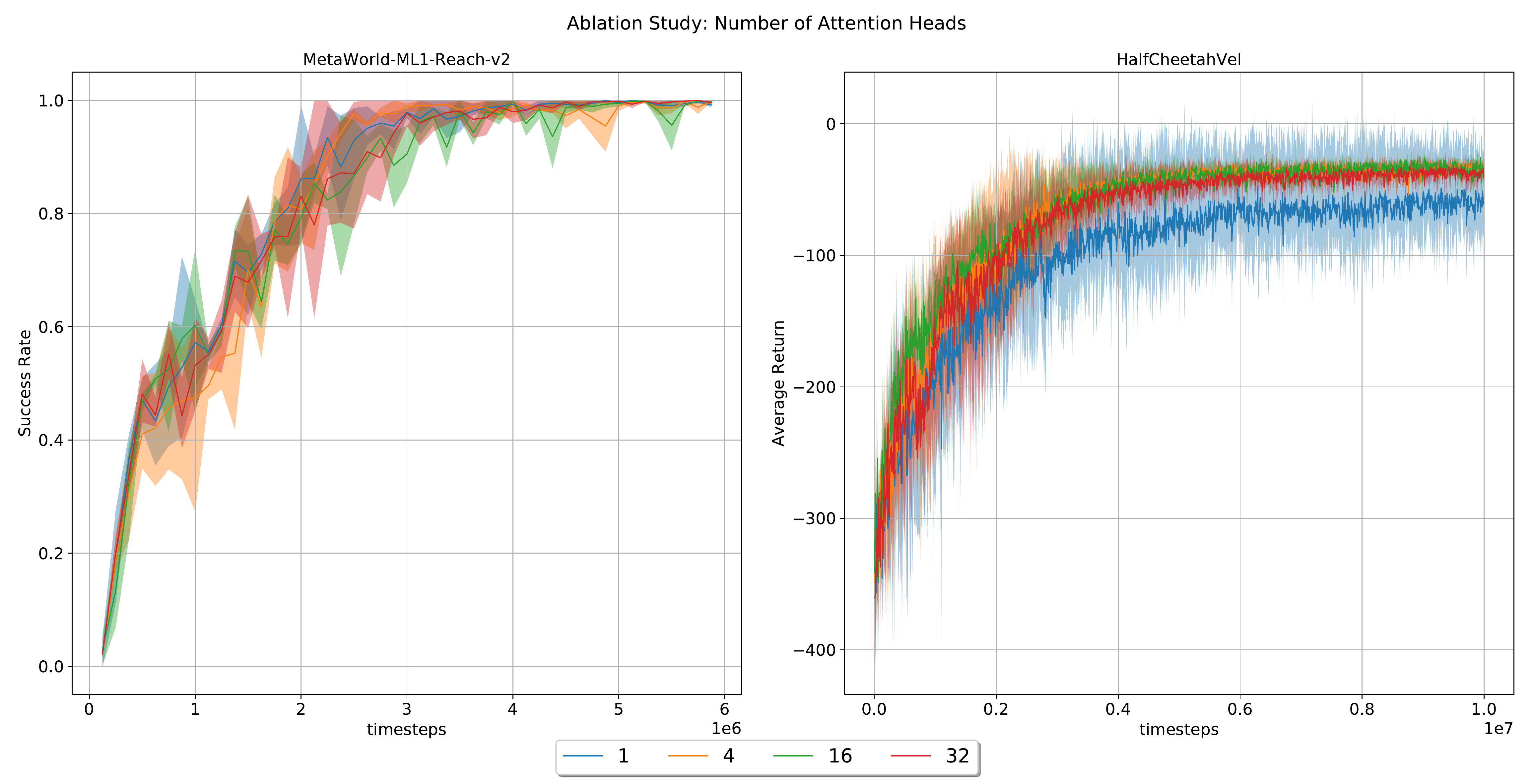}
  \caption{Ablation study for the number of attention heads.}
  \label{fig:ablationnheads}
\end{figure}

\newpage

\section{Proof of Theorem ~\ref{theor1}}\label{ap:theorproof}
\fta*
\begin{proof}
Let us define $\mathcal{S}^{l}_{V}$ as the set containing the projection of the elements in $\mathcal{S}^{l}$ onto the $V$ space: $\mathcal{S}^{l}_{V} = (\boldsymbol{e}^{l, V}_{0}, \dots, \boldsymbol{e}^{l, V}_{N})$, where $\boldsymbol{e}^{l, V} = W_{V} \cdot \boldsymbol{e}^{l}$ ($W_{V}$ is the projection matrix). The Bayes risk of selecting $\hat{\boldsymbol{e}}^{l, V}$ as representation, $BR(\hat{\boldsymbol{e}}^{l, V})$, under a loss function $\mathcal{L}$, is defined by:
\begin{equation}
    BR(\hat{\boldsymbol{e}}^{l, V}) = \mathbb{E}_{p(\boldsymbol{e}|\mathcal{S}^{l}_{V}, \boldsymbol{\theta}_{l})}[\mathcal{L}(\boldsymbol{e}, \hat{\boldsymbol{e}})]
\end{equation}

The MBR predictor selects the episodic memory $\hat{\boldsymbol{e}}^{l, V} \in \mathcal{S}^{l}_{V}$ that minimizes the Bayes Risk among the set of candidates: $\boldsymbol{e}^{l,V}_{MBR} = \argminA_{\hat{\boldsymbol{e}} \in \mathcal{S}^{l}_{V}} BR(\hat{\boldsymbol{e}})$. Employing negative cosine similarity as loss function, we can represent MBR prediction as:

\begin{equation}
    \boldsymbol{e}^{l, V}_{MBR} = \argmaxA_{\hat{\boldsymbol{e}} \in \mathcal{S}^{l}_{V}} \mathbb{E}_{p(\boldsymbol{e}|\mathcal{S}^{l}_{V}, \boldsymbol{\theta}_{l})}[\langle\boldsymbol{e}, \hat{\boldsymbol{e}}\rangle]
\end{equation}

The memory representation outputted from a self-attention operation in layer $l+1$ is given by:

\begin{align}\label{eq:proof1}
    e^{l+1}_{N} = \frac{\sum_{t=1}^{N}{\boldsymbol{e}_{t}^{l, V}}\cdot\exp{\langle\boldsymbol{e}^{l, Q}_{t}, \boldsymbol{e}^{l, K}_{i} \rangle}}{\sum_{t=1}^{N}{\exp{\langle\boldsymbol{e}^{l, Q}_{t}, \boldsymbol{e}^{l, K}_{i} \rangle}}} 
    = \sum_{t = 1}^{N} \alpha_{N, t} \cdot \boldsymbol{e}_{t}^{l, V}
\end{align}

The attention weights $\alpha_{N, t}$ define a probability distribution over the samples in $\mathcal{S}^{l}_{V}$, which approximates the posterior distribution $p(\boldsymbol{e}|\mathcal{S}^{l}_{V}, \boldsymbol{\theta}_{l})$. Hence, we can represent Equation \ref{eq:proof1} as an expectation: $e^{l+1}_{N} = \mathbb{E}_{p(\boldsymbol{e}|\mathcal{S}^{l}_{V}, \boldsymbol{\theta}_{l})}[\boldsymbol{e}]$. Finally, we compute the Bayer risk for it:

\begin{align}
    BR(e^{l+1}_{N}) &= \mathbb{E}_{p(\boldsymbol{e}|\mathcal{S}^{l}_{V}, \boldsymbol{\theta}_{l})}[\langle\boldsymbol{e}, \hat{\boldsymbol{e}}^{l+1}_{N}\rangle] \nonumber \\
    &= \mathbb{E}_{p(\boldsymbol{e}|\mathcal{S}^{l}_{V}, \boldsymbol{\theta}_{l})}[\langle\boldsymbol{e}, \mathbb{E}_{p(\boldsymbol{e}|\mathcal{S}^{l}_{V}, \boldsymbol{\theta}_{l})}[\boldsymbol{e}]\rangle] \nonumber \\
    &= \biggl< \mathbb{E}_{p(\boldsymbol{e}|\mathcal{S}^{l}_{V}, \boldsymbol{\theta}_{l})}[\boldsymbol{e}], \mathbb{E}_{p(\boldsymbol{e}|\mathcal{S}^{l}_{V}, \boldsymbol{\theta}_{l})}[\boldsymbol{e}] \biggr> \nonumber \\
    & \geq \biggl< \mathbb{E}_{p(\boldsymbol{e}|\mathcal{S}^{l}_{V}, \boldsymbol{\theta}_{l})}[\langle\boldsymbol{e}, \hat{\boldsymbol{e}}\rangle], \hat{\boldsymbol{e}} \biggr> \nonumber \\
    &= \mathbb{E}_{p(\boldsymbol{e}|\mathcal{S}^{l}_{V}, \boldsymbol{\theta}_{l})}[\langle\boldsymbol{e}, \hat{\boldsymbol{e}}\rangle], \forall \hat{\boldsymbol{e}} \in \mathcal{S}^{l}_{V}.
\end{align}
\end{proof}

\newpage
\section{Pseudocode}\label{ap:pseudocode}
In this section, we present a pseudocode for TrMRL's agent during its interaction with an arbitrary MDP.

\begin{algorithm}[H]
\caption{TrMRL -- Forward Pass}\label{alg:cap}
    \begin{algorithmic}
        \Require MDP $\mathcal{M} \sim p(\mathcal{M})$
        \Require Working Memory Sequence Length $N$
        \Require Parameterized function $\phi(\boldsymbol{s}, \boldsymbol{a}, r, \eta)$
        \Require Transformer network with $L$ layers $\{f_{1}, \dots, f_{L} \}$
        \Require Policy Head $\pi$
        \State Initialize Buffer with $N-1$ PAD transitions: $\mathcal{B} =  \{ (\boldsymbol{s}_{PAD}, \boldsymbol{a}_{PAD}, r_{PAD}, \eta_{PAD})_{i} \}, i \in \{1, \dots, N-1\}$
        \State $t \gets 0$
        \State $\boldsymbol{s}_{next} \gets \boldsymbol{s}_{0}$
        \While{episode not done}
        \State Retrieve the $N - 1$ most recent transitions $(\boldsymbol{s}, \boldsymbol{a}, r, \eta)$ from $\mathcal{B}$ to create the ordered subset $\mathcal{D}$
        \State $\mathcal{D} \gets \mathcal{D} \bigcup (\boldsymbol{s}_{next}, \boldsymbol{a}_{PAD}, r_{PAD}, \eta_{PAD})$
        \State Compute working memories:
        \State \hspace{5mm} $\phi_{i} = \phi(\boldsymbol{s}_{i}, \boldsymbol{a}_{i}, r_{i}, \eta_{i}), \forall\{\boldsymbol{s}_{i}, \boldsymbol{a}_{i}, r_{i}, \eta_{i}\} \in \mathcal{D}$
        \State Set $e^{0}_{1}, \dots, e^{0}_{N} \gets \phi_{1}, \dots, \phi_{N}$ 
        \ForEach {$l \in {1, \dots, L}$}
        \State Refine episodic memories: 
        \State \hspace{5mm} $e^{l}_{1}, \dots, e^{l}_{N} \gets f_{l}(e^{l-1}_{1}, \dots, e^{l-1}_{N})$
        \EndFor
        \State Sample $\boldsymbol{a}_{t} \sim \pi(\cdot | e^{L}_{N})$
        \State Collect $(\boldsymbol{s}_{t+1}, r_{t}, \eta_{t})$ interacting with $\mathcal{M}$ applying action $\boldsymbol{a}_{t}$
        \State $\boldsymbol{s}_{next} \gets \boldsymbol{s}_{t+1}$
        \State $\mathcal{B} \gets \mathcal{B} \bigcup (\boldsymbol{s}_{t}, \boldsymbol{a}_{t}, r_{t}, \eta_{t})$
        \State $t \gets t + 1$
        \EndWhile
    \end{algorithmic}
\end{algorithm}


\end{document}